\documentclass{article}

     \PassOptionsToPackage{round, numbers, compress}{natbib}
\usepackage[preprint]{neurips2026/neurips_2026}

\usepackage[utf8]{inputenc} 
\usepackage[T1]{fontenc}    
\usepackage{hyperref}       
\usepackage{url}            
\usepackage{booktabs} 
\usepackage{amsmath, amssymb, amsthm, amsfonts} 
\usepackage{nicefrac}       
\usepackage{microtype}      
\usepackage{xcolor}         
\usepackage{mathtools} 
\usepackage{tikz} 
\usepackage{algorithm}
\usepackage{algpseudocode}
\usepackage{graphicx}
\usepackage{subcaption}
\usepackage{hyperref}
\usepackage{makecell}
\usepackage{nicefrac}       
\usepackage{microtype}      
\usepackage{soul}
\usepackage{cleveref} 
\usepackage{enumitem}
\usepackage{dsfont,bm}
\hypersetup{colorlinks=true, linkcolor = blue,
     anchorcolor = blue,
     citecolor = blue,
     filecolor = blue,
     urlcolor = blue}

\def\redb{\color{red}\bf}
\def\blueb{\color{blue}\bf}
\def\algbound{{{\blueb Bound regression}}}
\def\algsamp{{{\redb Surrogate sampling}}}

\DeclareMathOperator*{\argmin}{arg\,min}
\DeclareMathOperator*{\argmax}{arg\,max}

\newtheorem{thmprop}{Proposition}
\newtheorem{thmthm}{Theorem}

\newtheorem{thmasmp}{Assumption}
\newtheorem*{thmidasmp*}{Identifying assumptions}

\theoremstyle{definition}

\newtheorem*{thmrem*}{Remark}
\newtheorem*{thmprop*}{Proposition}

\makeatletter
\newtheorem*{rep@theorem}{\rep@title}
\newcommand{\newreptheorem}[2]{%
\newenvironment{rep#1}[1]{%
 \def\rep@title{#2 \ref{##1} (Restated)}%
 \begin{rep@theorem}}%
 {\end{rep@theorem}}}
\makeatother
\theoremstyle{plain}
\newreptheorem{theorem}{Theorem}
\newreptheorem{prop}{Proposition}
\newreptheorem{lemma}{Lemma}
\newreptheorem{cor}{Corollary}
\theoremstyle{definition}
\newreptheorem{def}{Definition}

\newcommand\independent{\protect\mathpalette{\protect\independenT}{\perp}}
\def\independenT#1#2{\mathrel{\rlap{$#1#2$}\mkern2mu{#1#2}}}
\def\indep{\independent{}}

\def\med{\operatorname{med}}
\def\leaf{\operatorname{leaf}}
\def\proxy{\operatorname{proxy}}

\def\E{\mathbb{E}}

\def\hh{\hat{h}}
\def\mc#1{\mathcal{#1}}
\def\e{\epsilon}
\DeclarePairedDelimiter\set{\{}{\}}

\def\bbR{\mathbb{R}}

\def\cF{\mathcal{F}}

\title{Learning plug-in surrogate endpoints for randomized experiments}

\author{%
  Alessandro-Umberto Margueritte \thanks{Equal contribution.} \\
  AstraZeneca\\
  Chalmers University of Technology  \\ and University of Gothenburg \\
  \And
  Ahmet Zahid Balc{\i}o\u{g}lu \footnotemark[1]\\
  Chalmers University of Technology  \\ and University of Gothenburg \\
\texttt{ahmet.balcioglu@chalmers.se} \\
  \AND
  Jesse H. Krijthe \\
  TU Delft \\
  \And
  Dave Zachariah \\
  Uppsala University \\
  \And
  Fredrik D. Johansson \\
  Chalmers University of Technology  \\ and University of Gothenburg \\
  }
\begin{document}
\maketitle
\begin{abstract}
Surrogate endpoints are used in place of long-term outcomes in randomized experiments when observing the real outcome for a large enough cohort is prohibitively expensive or impractical. A short-term surrogate is good if the result of an experiment using the surrogate is predictive of the result of a hypothetical study using the real outcome. Much attention has been paid to formalizing this property in causal terms, but most criteria are unidentifiable and cannot be turned into practical algorithms for learning surrogate endpoints from data. To address this, we study plug-in composite surrogates, functions of post-treatment variables that may be substituted directly for the primary outcome in a randomized experiment. We propose two methods for learning plug-in surrogates that maximize effect predictiveness, and characterize the possibility of finding endpoints that yield unbiased effect estimates in representative scenarios. Finally, in both synthetic experiments with known effects and in data from a real-world experiment, we find that our method, based on directly modeling the surrogate effect, returns plug-in endpoints more predictive of the primary effect than established methods.
\end{abstract}
\section{Introduction}
\label{sec:introduction}
In many experiments, such as a clinical trial or A/B tests, the long-term effects of interventions are difficult to estimate since the outcomes of interest rarely occur within the study period. For instance, nutrition science is notorious for its many examples~\citep{yetley2017surrogate}, where the risk of cardiovascular disease is an outcome of interest, but few study participants can maintain a prescribed diet long enough to see disease develop. Instead, experimentalists have turned to measurable \emph{surrogates} of the primary outcomes---variables hoped to be informative of the efficacy of the intervention \emph{and} yield reliable effect estimates on the primary outcome with tractable trial lengths and cohort sizes.

Surrogate endpoints have been identified, variously, by known mechanistic relationships~\citep{fleming1996surrogate,temple1999surrogate} or statistical associations~\citep{prentice1989surrogate,Baker2018-FiveCriteriaUsingSurrogate} with the primary outcome, as mediators of the causal effect~\citep{chen2007criteria,robins1992identifiability}, or as variables on which the causal effect of the intervention is predictive of the causal effect on the primary outcome~\citep{prentice1989surrogate,chen2007criteria,meyvisch2020surrogate,burzykowski2005evaluation}. Formalizations of these conditions~\citep{chen2007criteria,joffe2009related,vanderweele2013surrogate} have opened the possibility for the data-driven search for new surrogates based on observational~\citep{athey2025surrogate} or past experimental data~\citep{tripuraneni2024choosing,wang2022surrogate,zhang2023evaluating}, to design experiments. 

Learning surrogate endpoints from observational data requires the causal identification~\citep{pearl2009causality} of surrogacy criteria as learning objectives, as well as ensuring the transportability~\citep{pearl2011transportability} of estimates to future experiments. However, most criteria are based on matching the \emph{average} causal effects of surrogate and outcome~\citep{vanderweele2013surrogate}, quantities that are sensitive to distribution shift, and are insufficient for personalized decision-making. Under appropriate assumptions, this can be overcome by learning a set of nuisance functions that capture both effect heterogeneity, confounding due to pre-treatment variables, and transportability, and using these for statistical estimates in the trial~\citep{athey2025surrogate}. However, using such functions as trial endpoints is undesirable, as they depend directly on pre-treatment variables.  

In this work, we study learning \emph{plug-in surrogates} from observational data---functions of only post-treatment variables that can be used in experiments or clinical trials as direct substitutes for the primary outcome. We contribute
\vspace{-0.75em}
\begin{itemize}
    \item Desiderata and a corresponding learning objective for plug-in surrogates, aimed at capturing as much heterogeneity in the causal effect on the outcome as possible. 
    \item An analysis of scenarios where good plug-in surrogates exist, and lead to effect estimates that are unbiased for the primary outcome, and where they don't.
    \item Two algorithms designed to minimize our proposed objective and an empirical evaluation on synthetic and real-world data, highlighting their strengths and weaknesses compared to established methods.
\end{itemize}
We find that our algorithm, which directly models the surrogate effect by sampling counterfactual surrogate values from estimated densities, consistently returns plug-in endpoints that are more predictive of the primary-outcome effect than those of baseline methods.
\section{Problem setup \& related work}
\label{sec:background}
We study surrogates for the primary outcome $Y  \in \mathbb{R}$ in a randomized experiment, aimed at estimating the causal effect of a binary intervention (treatment) $T$ on $Y$ in the context of a set of pre-treatment variables $X \in \bbR^k$. We focus on experiments where $Y$ cannot be observed directly due to practical limitations, such as the outcome being too long-term to observe (e.g., cardiovascular disease or death), and we must rely on a set of post-treatment variables $S \in \bbR^d$ (e.g., metabolic markers) as candidate surrogates for $Y$.  

Our goal is to \emph{learn} a composite endpoint $f(S)$, a function $f : \bbR^d \rightarrow \bbR$ of the candidate surrogates $S$, that can serve \emph{as a plug-in replacement for} $Y$ in a typical randomized experiment. First, $f$ is learned from retrospective data from an observational population, denoted $P=o$. Second, an experiment $(P=e)$ is performed, in with each instance $i$ is randomly assigned a treatment $t_i \in \{0,1\}$. The post-treatment surrogate vector $s_i \in \bbR^d$ is observed and used to compute $f_i = f(s_i)$, which is in turn used to estimate the treatment effect on $f$, as if $f_i$ were the primary outcome $y_i$. The trial proceeds exactly as it would have if $Y$ were observed, and conclusions about the effects of $T$ on $Y$ are drawn based on the effects of $T$ on $f(S)$. For this to be a sound and effective strategy, $f(S)$ must satisfy the following desiderata:
\begin{enumerate}[label={D\arabic*}]
    \item The intervention $T$ has a causal effect on the composite endpoint $f(S)$ that is measurable in an experiment \label{des:effect}
    \item The causal effect of $T$ on $f(S)$ is predictive of the causal effect of $T$ on the primary endpoint $Y$, ideally capturing important effect heterogeneity \label{des:pred}
    \item The composite $f(S)$ is an interpretable function of post-treatment variables that is learned \emph{before} the trial\label{des:int}
\end{enumerate}%

We expand on these desiderata below, in relation to previous works. For an excellent overview, see \citet{meyvisch2020surrogate}.

 First, we let $S(0), S(1), Y(0), Y(1)$ denote the full vector of \emph{potential outcomes}~\citep{rubin1974estimating} of  $S$ and $Y$, corresponding to interventions $t \in \{0,1\}$ on $T$.
The causal effects of $T$ on the surrogate set and primary outcome are defined by
$$
\Delta_S = S(1) - S(0) \;\;\mbox{ and }\;\; \Delta_Y = Y(1) - Y(0)~.
$$
We refer to these as the \emph{surrogate set effect} and \emph{primary effect}, respectively. For a composite endpoint function $f : \bbR^d \rightarrow \bbR$, we define $\Delta_{f(S)} = f(S(1)) - f(S(0))$ and refer to this as the \emph{surrogate effect}.

\paragraph{Regression of outcomes on post-treatment variables may return confounded plug-in surrogates.}\mbox{}\\
A common strategy for finding plug-in surrogates among multiple candidate variables is to fit a regression of $Y$ onto $S$, producing an estimate $f(S) \approx \E[Y \mid S]$ and using this as a composite score or to select a single variable as surrogate~\citep{coley2020dementia,esposito2004effect,martinez2015benefits,keys1984seven,chan2006apolipoproteins}. 
Often, a linear model $f(S) = \theta^\top S$ is used either as the score itself, or to select a variable $f(S) = S_j = \argmax_{j}|\theta_j|$, based on the sizes of the fitted coefficients.

The strengths and weaknesses of the regression strategy are well known~\citep{fleming_demets_1996surrogate,ciani2023framework,baker2003perfect,Baker2018-FiveCriteriaUsingSurrogate}.
Without confounding and with perfect mediation, i.e., when the causal graph contains only $T \rightarrow S \rightarrow Y$, it yields a surrogate whose effect $\tau_{f(S)}$ is predictive of the effect on the primary outcome. However, this precludes both direct effects of $T$ on $Y$, and common causes of $S$ and $Y$. If $S$ and $Y$ are both causally affected by variable $X$, as in Figure~\ref{fig:cdags4}, the outcome regression strategy will lead to confounded effect estimates in general. We show how this bias emerges and give an example scenario in Appendix~\ref{app:confounded_regression}. %
Moreover, when $S$ contains variables that are predictive of $Y$ but unaffected by $T$, variable selection or constrained/regularized models can fail, as they may favor associations that are irrelevant for the causal effect.
To avoid this, we aim to learn composite surrogates $f$ that \emph{capture the causal effect of $T$ on $Y$} (\ref{des:effect}).

\paragraph{Learning to match average causal effects leads to underspecified surrogate endpoints.}\mbox{}\\
The causal effect on a composite surrogate $f(S)$ is predictive (\ref{des:pred}) of the primary effect if $\Delta_Y$ is informative of $\Delta_{f(S)}$. A surrogate with this property enables consistent decision-making to improve $Y$, based on estimates of $\Delta_{f(S)}$ or its aggregates. If \ref{des:pred} is false, a trial may find no effect on the surrogate, even though there is an effect on the primary outcome. The reverse error can be equally costly, for example, if the effect on a surrogate does not replicate in a costly long-term trial where the main outcome is measured.

Regrettably, the joint distribution $p(\Delta_Y, \Delta_{f(S)})$ is unidentifiable (see Appendix~\ref{app:identifiability}), as is e.g., the mutual information $I(\Delta_{f(S)}; \Delta_Y)$~\citep{meyvisch2020surrogate} and correlation $\rho(\Delta_{f(S)}, \Delta_Y)$, prohibiting their use in learning $f$. Even with infinitely many randomized trials, we could not learn or validate a surrogate $f(S)$ based on such criteria. In fact, most surrogacy criteria in the literature are unidentifiable~\citep{vanderweele2013surrogate}.
Fortunately, most experiments have a more modest goal: to estimate the average (primary) treatment effect (ATE) in the experiment population, 
$$
\tau^e_Y = \E[\Delta_Y \mid P=e]~.
$$
The trial average surrogate effect $\tau^e_{f(S)} = \E[\Delta_{f(S)} \mid P=e]$ and observational-study counterparts $\tau^o_Y$ and $\tau^o_{f(S)}$ are defined analogously, the latter two conditioned on $P=o$. Henceforth, we let $p^o, p^e$ and $\E^o, \E^e$ denote densities and expectations under the observational and experimental populations, respectively, leaving out conditioning on $P$. 

Most surrogacy criteria are based on the possibility to recover the primary-outcome ATE using the surrogate~\citep{prentice1989surrogate,freedman1992statistical,frangakis2002principal}. For example, \citet{chen2007criteria} defined a \emph{consistent} surrogate $S$ (in the binary case) as a mediator for which the sign of the product $\tau^e_S \cdot \tau^e_{S \rightarrow Y}$ is equal to the sign of $\tau^e_Y$, where $\tau^e_{S \rightarrow Y}$ is the average causal effect of $S$ on $Y$. This rules out the ``surrogate paradox'', where $T$ and $S$ are positively associated, $S$ and $Y$ are positively associated, but $T$ has a negative causal effect on $Y$~\citep{chen2007criteria}. For recent works on evaluating surrogates, see Appendix~\ref{app:related}. 

ATE-based criteria suggest searching for a composite $f(S)$ to match the trial ATEs of the surrogate and primary outcome: $\tau^e_{f(S)} \approx \tau^e_Y$. Finding such a function  would allow its use as a plug-in replacement for $Y$ and support \emph{uniform} policy decisions based on experiment outcomes. However, this strategy has important caveats. 
First, it makes for a weak learning signal, leaving $f$ woefully \textbf{underspecified}. For example, under mild assumptions, there is a linear surrogate function $f(S) = \alpha S_j$, of any single variable $S_j$ affected by $T$, such that $\tau^e_{f(S)} = \tau^e_Y$ (see Appendix~\ref{app:ATE_matching}). Second, it \textbf{does not support personalized decision-making}, as it does not capture effect heterogeneity. Finally, the ATE is \textbf{not transportable} in general, $\tau^e_Y \neq \tau^o_Y$, as the distribution of contexts $X$ (e.g., population of subjects) vary, $p^e(X) \neq p^o(X)$, making the trial ATE difficult to learn in observational data. 

\paragraph{The surrogate index addresses transportability and personalization but depends on pre-treatment variables.}\mbox{}\\
To support personalized or contextual decision-making, it is suitable to estimate the \emph{conditional average treatment effect} (CATE) with respect to strata $X=x$, 
$$\tau^e_Y(x) = \E^e[\Delta_Y \mid X=x]~,$$
and act according the preferred treatment for each stratum.
We define $\tau^e_{f(S)}(x)$, $\tau^o_Y(x)$, and $\tau^o_{f(S)}(x)$ analogously. To simplify notation, $X$ will take the roles both of potential confounding variables and treatment effect moderators. In general, these could be separated into two distinct variables.

The \emph{surrogate index}, proposed by \citet{athey2025surrogate}, removes several of the limitations of ATE-matching strategies by estimating and transporting the nuisance function 
\begin{equation}\label{eq:surrogate_index}
h^o(x, s) \coloneqq \E^o[Y \mid X=x, S=s]~.
\end{equation}
Under ignorability, Prentice's surrogacy condition, and comparability, all to be defined formally later, it holds that
$$
\tau_Y^e(x) = \E^e[h^o(x, S) \mid x, T=1] - \E^e[h^o(x, S) \mid x, T=0]
$$
and, by definition, $\tau_Y^e = \E^e[\tau_Y^e(X)]$. Thus, both the trial ATE and CATE can be identified by learning $h^o(x, s)$ in observational data and averaging its predictions in the trial.

However, several properties of the surrogate index $h^o(x, s)$ make it \textbf{unsuitable as a plug-in surrogate} in clinical trials (\ref{des:int}). First, $h^o(x,s)$ depends directly on pre-treatment variables $X$ that cannot mediate the effect of $T$ on $Y$. This is generally not accepted for clinical trial endpoints~\citep{friedman2015fundamentals}, as pre-treatment variables are also used for trial inclusion criteria, and could be used to manipulate the estimated effect. Second, using the surrogate index relies on statistical quantities only estimable in trial data, such as the trial propensity $p(P=e \mid X, S)$, which prevents using $h^o(x,s)$ in place of $Y$ in a difference-in-means estimate of $\tau_Y^e$. Third, a surrogate must be such that a governing authority will accept the evidence provided by a trial using it as endpoint. Most likely, $f$ needs to belong to an interpretable model family, but the surrogate index analysis does not consider approximate estimates $h^o(x,s) \approx \E^o[Y \mid x, s]$.

\section{Learning composite endpoints}
\label{sec:method}

To challenge the limitations of existing strategies, we propose learning composite plug-in surrogate endpoints by searching for a function $f(S)$ with \emph{conditional} trial treatment effect most similar to the primary outcome, by solving
\begin{equation}\label{eq:main_obj}
    \underset{f \in \cF}{\mbox{minimize}} \;\; R_\tau^e(f) \coloneqq \E^e_{X}\left[\left(\tau_Y^e(X) - \tau^e_{f(S)}(X) \right)^2 \right]~.%
\end{equation}
The function $f$ will be learned \emph{before} the experiment starts from two sources of retrospective, typically observational, data sampled from distributions $p^o(X, T, S)$ and $p^o(X, S, Y)$, respectively. We do \emph{not} assume that all four variables are observed together. For instance, the first data could be the second-phase trials for a new treatment, while the latter could come from health records of past patients.

An endpoint $f(S)$ from an interpretable model family $\cF$ with small trial CATE error $R_\tau^e(f)$  satisfies our desiderata by optimizing for the predictability of the effect of $T$ on $Y$ (\ref{des:effect}--\ref{des:pred}), and serving as a plug-in surrogate based only on post-treatment variables, learned completely before the experiment (\ref{des:int}). Additionally, it supports \textbf{personalization} by being trained to capture treatment effect heterogeneity, and \textbf{transportability} by optimizing for use in $p^e(X)$ directly. 

Next, we prove the identifiability of $R_\tau^e(f)$, give strategies to minimize it, and discuss limitations of plug-in surrogates, paying attention to the conditions under which the method yields unbiased estimates of the primary treatment effect.

\subsection{Identifying the trial CATE error}
To solve \eqref{eq:main_obj} before the experiment starts, we must rely on statistical parameters estimable from observational data. Both $\tau_Y(x)$ and $\tau_{f(S)}(x)$ are unknown interventional quantities which must be causally \emph{identified} to avoid confounding and other biases. For this, we make classical identifying assumptions on the support and causal (in)dependences of variables, analogous to~\citet{athey2025surrogate}.
\begin{thmasmp}[Ignorability]\label{asmp:id} The context $X$ gives conditional exchangeability for $S$ and $Y$ in $P=o$, $\forall t \in \{0,1\}$,
$$
 Y(t) \indep T \mid X, P=o \quad S(t) \indep T \mid X, P=o~,
$$
and satisfies positivity (common support), 
$$
\forall x: p^o(x) > 0 \Longrightarrow p^o(T=t \mid X=x) > 0~.
$$
Endpoints satisfy consistency, $Y = Y(T), S=S(T)$.
\end{thmasmp}
Exchangeability can be guaranteed by graphical criteria, such as the backdoor criterion~\citep{pearl2009causality}, applied to the causal graph of the problem's variables. The four causal graphs in Figure~\ref{fig:cdags4} all guarantee exchangeability. We return to these in Section~\ref{sec:limitations} to discuss the biases of plug-in surrogates. For more discussions on graphical criteria for surrogacy, see~\citet{lauritzen2003graphical,vanderweele2013surrogate}.

\begin{figure}[t]
\centering
\begin{subfigure}{0.49\linewidth}
    \centering
    \includegraphics[width=\linewidth]{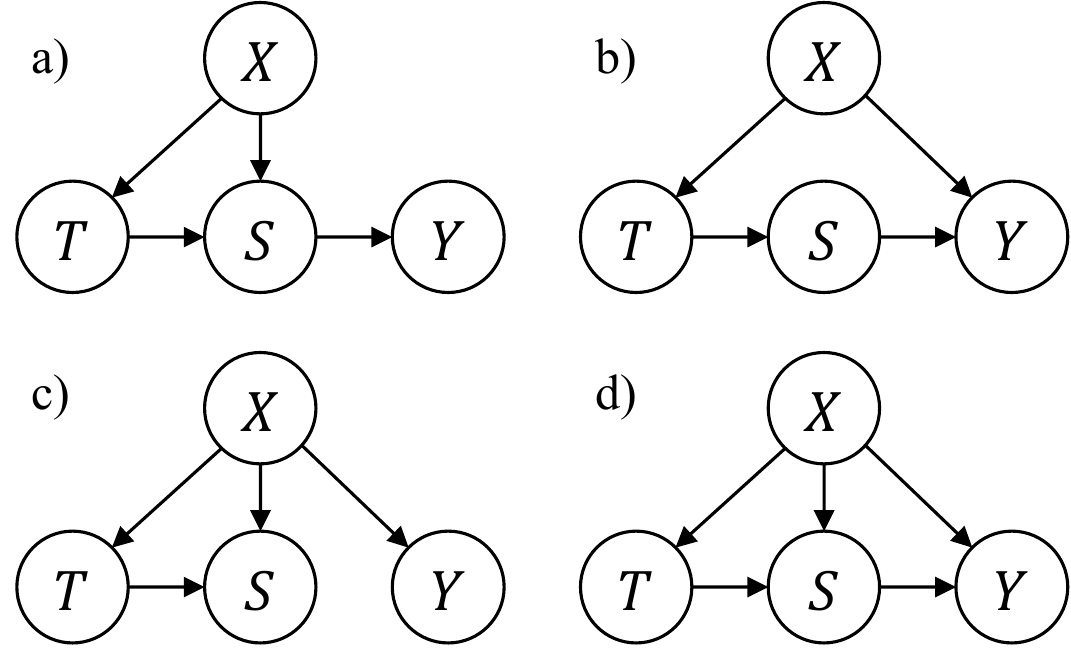}
     \caption{Special cases of causal graphs compatible with Assump.~\ref{asmp:id}--\ref{asmp:comp}, for which the CATE error \eqref{eq:main_obj} is identifiable.}
     \label{fig:cdags4}
\end{subfigure}
\hfill
\begin{subfigure}{0.49\linewidth}
    \centering
    \includegraphics[width=\linewidth]{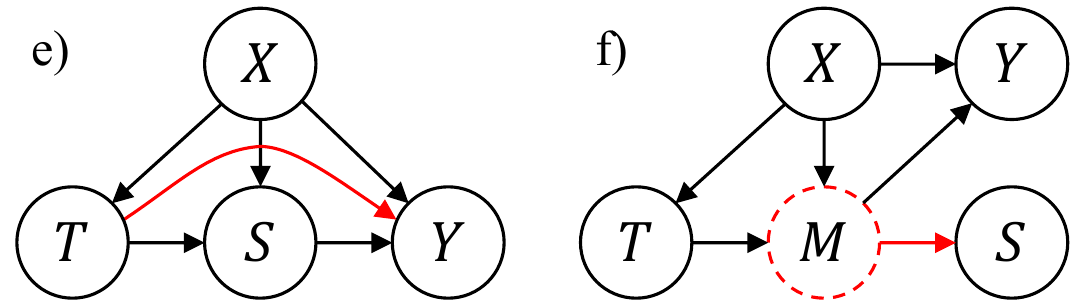}
     \caption{Challenging cases for surrogacy. In case e), there is a direct $T\rightarrow Y$ effect which prevents finding an optimal plug-in surrogate. In case f), the true mediator $M$ is unobserved, and $S$ are only proxies for $M$.}
     \label{fig:cdags2}
\end{subfigure}
\end{figure}

$S$ is a set of variables, each of which may be associated with a different set of confounders. Thus, justifying conditional exchangeability of $S(t)$ is, in general, more difficult than for a single outcome, $Y$. Moreover, performing an experiment intervening on $T$ can only remove confounding from the association between $T$ and $Y$, but not between $S$ and $Y$, and $S$ may not be possible to intervene upon itself. 

Under Assumption~\ref{asmp:id}, standard arguments yield that
\begin{align*}
\mu_Y^o(x,t) & \coloneqq  \E^o[Y(t) \mid X=x] = \E^o[Y \mid X=x, T=t] \\
& =  \E_S^o[\E_Y^o[Y \mid S, X=x, T=t] \mid X=x, T=t]
\end{align*}
and $p^o(S(t) \mid X=x) = p^o(S \mid X=x, T=t)$. 
For most definitions of surrogacy~\citep{athey2025surrogate,chen2007criteria,prentice1989surrogate}, it is additionally assumed that 
\begin{thmasmp}[Prentice surrogacy]\label{asmp:med}
    The surrogate set $S$ and context $X$ render the outcome independent of treatment,
    $$Y \indep T \mid X, S, P=e~.$$%
\end{thmasmp}%
Assumption~\ref{asmp:med} is not trivial to satisfy as it rules out $T$ having a direct effect on $Y$, or an effect mediated by an unobserved variable (see Figure~\ref{fig:cdags2}). It is \emph{more} plausible when $S$ is a set of multiple relevant variables, rather than a single factor, but still requires problem-specific justification.

Under Assumptions~\ref{asmp:id}--\ref{asmp:med}, the conditional expected potential outcomes $\mu_Y^o(x,t)$ can be identified from $p^o(X,T,S)$ and $p^o(X, S, Y)$ since the conditioning on $T$ in the inner expectation above can be dropped. 
It follows trivially that $\tau_{f(S)}^o(x)$ and $\tau_{Y}^o(x)$ are identified as well. 
To identify Objective \eqref{eq:main_obj}, we additionally need to relate $p^e$ and $p^o$. Broadly, this is a problem of transportability~\citep{pearl2011transportability}, typically solved by imposing assumptions on shared conditionals between $p^e$ and $p^o$, so also here. 
\begin{thmasmp}[Transportability]\label{asmp:comp}
    For the experiment and observational populations. $p^e$, $p^o$, it holds for all $x, t, s$ that
    \begin{align*}
    p^e(S \mid X=x, T=t) & = p^o(S \mid X=x, T=t)\\
    p^e(Y \mid X=x, S=s) & = p^o(Y \mid X=x, S=s)~.
    \end{align*}%
\end{thmasmp}%
The second equality is assumed by \citet{athey2025surrogate} as ``comparability''. Here, the assumption on the distribution of $S$ is made to support learning $f(S)$ ahead of the experiment. 

\begin{thmprop}\label{prop:cate}
    Under Assump.~\ref{asmp:id}--\ref{asmp:comp}, with $h^o(x,s)$ in \eqref{eq:surrogate_index},
    \begin{align*}
    \tau_{Y}^e(x) = \tau_Y^o(x) & = \E_S^o[h^o(x,S) \mid X=x, T=1] \\
    & - \E_S^o[h^o(x,S) \mid X=x, T=0] \\
    \tau_{f(S)}^e(x) = \tau_{f(S)}^o(x) & = \E_S^o[f(S) \mid X=x, T=1] \\
    & - \E_S^o[f(S) \mid X=x, T=0]~.
    \end{align*}
\end{thmprop}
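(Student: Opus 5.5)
The plan is to prove each chain of equalities by rewriting every interventional quantity as an observational one, using three ingredients in turn: randomization or ignorability (to remove potential outcomes), Prentice surrogacy (to drop $T$ from the inner outcome regression), and transportability (to replace $p^e$ by $p^o$). Since the experiment randomizes $T$, I will use that $Y(t), S(t) \indep T \mid X, P=e$ and that positivity holds in $P=e$. Together with consistency, this gives $\E^e[Y(t)\mid X=x] = \E^e[Y\mid X=x, T=t]$ and $p^e(S(t)\mid X=x) = p^e(S \mid X=x, T=t)$. In $P=o$, the same identities come from Assumption~\ref{asmp:id}.

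\emph{The surrogate effect.} This part is the simplest, so I will do it first. By randomization, $\E^e[f(S(t))\mid x] = \E^e[f(S)\mid x, T=t]$. The first equality in Assumption~\ref{asmp:comp} then replaces the conditional density of $S$, giving $\E^o[f(S)\mid x, T=t]$. By Assumption~\ref{asmp:id} this equals $\E^o[f(S(t))\mid x]$. Differencing over $t\in\{0,1\}$ yields both $\tau^e_{f(S)}(x)=\tau^o_{f(S)}(x)$ and the stated observational formula.

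\emph{The primary effect.} I will write $\E^e[Y(t)\mid x] = \E^e[Y\mid x,T=t]$ and apply the tower rule over $S$, which gives $\E^e_S\big[\E^e[Y\mid x,S,T=t]\mid x, T=t\big]$. Assumption~\ref{asmp:med} removes $T$ from the inner expectation, so it becomes $\E^e[Y\mid x,S]$. The second equality in Assumption~\ref{asmp:comp} turns this into $\E^o[Y\mid x,S]=h^o(x,S)$. The first equality in Assumption~\ref{asmp:comp} then replaces the outer law, giving $\E^o_S[h^o(x,S)\mid x,T=t]$. Differencing over $t$ gives the displayed formula for $\tau^e_Y(x)$.

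\emph{Matching $\tau^o_Y(x)$.} Here I will use the decomposition shown after Assumption~\ref{asmp:id}: $\mu^o_Y(x,t)=\E^o_S\big[\E^o[Y\mid S,x,T=t]\mid x,T=t\big]$. I then need $\E^o[Y\mid x,s,T=t]=h^o(x,s)$, that is, that the Prentice condition also holds in $P=o$. I expect this to be the delicate step, because Assumption~\ref{asmp:med} is stated only for $P=e$. My first approach would be to transport it through the comparability assumption, arguing that the shared conditional $p(Y\mid x,s)$ reflects a common mechanism. However, Assumption~\ref{asmp:comp} only equates the marginal-in-$T$ conditionals, so this does not go through formally. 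I will therefore state explicitly that I use $Y\indep T\mid X,S$ in $P=o$ as well. This is how the paper's preceding text treats it ("the conditioning on $T$ in the inner expectation can be dropped").

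With that condition, $\mu^o_Y(x,t)=\E^o_S[h^o(x,S)\mid x,T=t]$, which coincides with the experimental expression above. This closes the chain. Positivity is used only to ensure that the conditionals on $T=t$ are well defined.
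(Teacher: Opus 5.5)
Your proof is correct and follows essentially the same route as the paper's: ignorability, the tower rule over $S$, Prentice surrogacy to drop $T$ from the inner regression, the definition of $h^o$, and Assumption~\ref{asmp:comp} to move between populations (you start from $P=e$ and the paper starts from $P=o$, which is immaterial). You are right that $Y \indep T \mid X, S$ in $P=o$ does not follow from Assumptions~\ref{asmp:id}--\ref{asmp:comp} as stated; the paper's step (b) uses it there without comment and also relies implicitly on randomization in $P=e$, so flagging both explicitly makes your version a more careful rendering of the same argument.
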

We give a proof in \Cref{app:proofs} following from standard arguments. To fully identify~\eqref{eq:main_obj}, the trial population $p^e(X)$ or the density ratio $p^e(X)/p^o(X)$ must be known, to account for differences the between observational and experiment cohorts. The surrogate index strategy of  \citet{athey2025surrogate}, does not rely on pre-experiment access to this information since only the surrogate index $h^o(X,S)$ is transferred from the observational study. This is not an option under desideratum \ref{des:int}. 
Moving forward, we assume that $p^e(X)/p^o(X)$ is known a priori to an accuracy sufficient to find an approximate minimizer of \eqref{eq:main_obj}. This is possible, for example, when the main distributional difference between experimental and observational cohorts is due to inclusion criteria $I$ applied for the experiment, as is standard practice in clinical trials. With $\mathds{1}[x \in I]$ indicating that $x$ satisfies the inclusion criteria, it is plausible that $p^e(x) \propto p^o(x)\mathds{1}[x \in I]$.

Finally, building on Proposition~\ref{prop:cate}, under Assumptions~\ref{asmp:id}--\ref{asmp:comp}, we can identify the trial CATE error as follows
\begin{align}
R_\tau^e(f) &= \E^o_X \bigg[ \frac{p_e(X)}{p_o(X)}\big(\E^o_S[h^o(X, S) -f(S) \mid X, T=1] \nonumber \\
& \qquad\; - \E^o_S[h^o(X, S) -f(S) \mid X, T=0] \big)^2 \bigg]. \label{eq:risk_id}
\end{align}
which is estimable from $p^o(X,T,S)$ and $p^o(X,S,Y)$. Moving forward, we drop the superscript from $h^o(x,s)$.

\subsection{Minimizing the expected CATE error}
Minimizing \eqref{eq:risk_id} requires estimating $R^e_\tau(f)$ in a way that is amenable to optimization of $f$. Below, we present two strategies for this. Both start by fitting an estimate $\hat{h}(x,s) \approx \E^o[Y \mid X=x, S=s]$ by regression on an observational data set $D_y^o = \{(x_j', s_j', y_j')\}_{j=1}^{m_y}$ drawn from $p^o(X, S, Y)$.

\subsubsection*{Estimation through surrogate sampling}
Using $\hat{h}(x,s)$ in place of $h(x,s)$ in \eqref{eq:risk_id}, the expression depends only on variables $(x,s,t)$ and can be estimated using data set $D_t^o = \{(x_j, t_j, s_j)\}_{j=1}^{m_t}$. However, each sample holds only an observation of the factual potential outcome $S_i(t_i)$ for a given $x$, but not of the counterfactual $S_i(1-t_i)$, making it difficult to estimate the inner expectations. 

If an accurate model $\hat{p}(S \mid x, t) \approx p(S(t) \mid x)$  is available, the inner expectations of \eqref{eq:risk_id} can, in principle, be computed exactly, but only for trivially small problems or simple distributions (such as when $f, h$ are linear, see Section~\ref{sec:limitations}). 
Instead, we propose sampling potential outcomes $S(0)$ and $S(1)$ from $\hat{p}(S \mid x, t)$ to yield augmented data points $(x_i, s_i, t_i, \hat{s}(0)^1_i, ..., \hat{s}(0)^L_i, \hat{s}(1)^1_i, ..., \hat{s}(1)^L_i,)$,  with $L$ samples $\hat{s}(t)^l_i$ of each potential outcome, and used to estimate 
$$
\E_S[\hat{h}(x_i, S) \mid X=x_i, T=t] \approx \frac{1}{L}\sum_{l=1}^L \hat{h}(x_i, \hat{s}(t)_i^l),
$$
and $\E_S[f(S)\mid x_i, t]$, analogously. Plugging these estimates into \eqref{eq:risk_id} for $t \in \{0,1\}$, and averaging over $p^e(x)$, we have an estimator $\hat{R}^e_\tau(f)$ of the full trial CATE risk. We refer to the algorithm that fits $f(S)$ by minimizing this estimate as \algsamp{}, described in Algorithm~\ref{alg:methods}. In the stated version, we ignore population differences $p^e(x) \neq p^o(x)$. Such differences can be accommodated by re-weighting the risk average with the density ratio $p^e(x)/p^o(x)$.

The minimizers $f^*$ of \eqref{eq:risk_id} are translation invariant, that is, $R^e_\tau(f) = R^e_\tau(f+b)$ for any constant $b$, and may not align in scale with the primary outcome, $Y$. This can be easily post-calibrated using samples of $Y$ or the fitted function $\hat{h}$.

Estimating conditional densities $p(S \mid x, t)$ for continuous $S$ can be difficult and relies on the availability of large data sets~\citep{papamakarios2021normalizing}. Discretizing the variable set, the conditional density may be represented by a probability table, but results in expensive computation and statistical challenges. For example, when $S$ is composed of only six covariates, approximating each covariate by ten bins results in estimating $p(S=s|X=x,T=t)$ for $10^6$ different configurations $s$, for each value of $x$.

\begin{algorithm}[t]
\caption{Learning plug-in composite endpoints}
\label{alg:methods}
\small
\begin{algorithmic}[1]
\State \textbf{Input:} Observational treatment data $D^o_t = \{(x_j, t_j, s_j)\}_{j=1}^{m_t}$ and outcome data $D^o_y = \{(s_i', x_i', y_i')\}_{i=1}^{m_y}$, model class $\cF$
\State Fit outcome model $\hat{h}(x,s) \approx \E[Y \mid x, s]$ to $D^o_y$
\If{\algsamp{}}
    \State Fit surrogate models $\hat{g}(x, t) \approx p(S \mid x, t)$ for $t\in \{0,1\}$
    \For{$j=1, ..., m$, $l=1, ..., L$}
        \State Draw $\hat{s}(t)^l_j \sim \hat{g}(x_j, t)$ for $t\in \{0,1\}$
    \EndFor
    \State \resizebox{7.2cm}{!}{$\displaystyle\hat{f} = \argmin_{f\in \cF} \sum_{j=1}^{m_t} \Big(\sum_{t=0}^1 (-1)^t\sum_{l=1}^L [\hat{h}(x_j,\hat{s}(t)_j^l) - f(\hat{s}(t)_j^l)] \Big)^2$}
\ElsIf{\algbound{}}
    \State Fit propensity score $\hat{e}(x) \approx p(T=1 \mid x)$ to $D^o_t$
    \State Fit surrogate score $\hat{\rho}(x,s) \approx p(T=1 \mid x, s)$ to $D^o_t$
    \State $\hat{w}^2(x,s) = \big(\hat{\rho}(x,s)/\hat{e}(x) - (1-\hat{\rho}(x,s))/(1-\hat{e}(x))\big)^2$ 
    \State $\displaystyle \hat{f} = \argmin_{f\in \cF} \sum_{j=1}^{m_t}\hat{w}^2(x_j,s_j)(\hat{h}(x_j,s_j) - f(s_j))^2$
\EndIf
\State \Return $\hat{f}$
\State {\scriptsize \emph{Note: If $p^e(x) \neq p^o(x)$, the objectives should be weighted by $p^e(x) / p^o(x)$.}}
\end{algorithmic}
\end{algorithm}

\subsubsection*{Minimizing an upper bound on the CATE risk}
To avoid sampling-based estimation or costly numerical integration of Objective~\ref{eq:main_obj}, we can minimize an upper bound of the objective, resulting in a weighted regression problem. 
\begin{thmprop}\label{prop:risk_bound}
    For any surrogate function $f$, under Assumptions~\ref{asmp:id}--\ref{asmp:comp}, the trial CATE risk is bounded as  
    \begin{align}
    R^o_\tau(f) &\leq \E_{X,S}[w^2(X,S)(h(X,S) - f(S))^2] \label{eq:cate_bound} \\
    & \leq \E_{X,S}[w^2(X,S)(Y - f(S))^2] + \sigma^2, \nonumber
    \end{align}
    where $w^2(x,s) = \left(\frac{p(T=1 \mid s,x)}{p(T=1 \mid x)} - \frac{p(T=0 \mid s,x)}{p(T=0 \mid x)}\right)^2$. The second inequality holds if the outcome has exogenous, centered, additive noise: $Y = h(X,S)+\epsilon$ where $\E[\epsilon]=0$, $\mbox{Var}(\epsilon) \leq \sigma^2$ and $\epsilon \indep S, X, T$. If $p_o(X)/p_e(X)$ is known, using weights $\tilde{w}^2(x,s) = \frac{p_o(x)}{p_e(x)}w^2(x,s)$ yields a bound on $R^e_\tau(f)$.
\end{thmprop}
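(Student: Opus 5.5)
We start from the identified form of the risk in Proposition~\ref{prop:cate} (for $R^o_\tau$, without the density ratio). Write $g(x,s) = h(x,s) - f(s)$; then
\[
R^o_\tau(f) = \E_X\Big[\big(\E[g(X,S)\mid X, T=1] - \E[g(X,S)\mid X, T=0]\big)^2\Big].
\]
The first step is a change of measure from $p(s\mid x,t)$ to $p(s\mid x)$ using Bayes' rule,
\[
p(s\mid x, t) = \frac{p(T=t\mid s,x)}{p(T=t\mid x)}\, p(s\mid x).
\]
This turns the inner difference into a single expectation over $S\mid X=x$:
\[
\E[g\mid x,1]-\E[g\mid x,0] = \E_{S\mid x}\big[w(x,S)\, g(x,S)\big],
\qquad
w(x,s)=\frac{p(T=1\mid s,x)}{p(T=1\mid x)}-\frac{p(T=0\mid s,x)}{p(T=0\mid x)}.
\]
Positivity (Assumption~\ref{asmp:id}) guarantees the denominators are nonzero.

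The second step is Jensen's inequality (equivalently Cauchy--Schwarz against the probability measure $p(s\mid x)$):
\[
\big(\E_{S\mid x}[w\,g]\big)^2 \le \E_{S\mid x}[w^2 g^2].
\]
Taking the outer expectation over $X$ and combining into a joint expectation over $(X,S)$ gives the first inequality in \eqref{eq:cate_bound}.

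For the second inequality, we write $h(X,S)-f(S) = (Y - f(S)) - \epsilon$ and expand the square:
\[
\E[w^2(h-f)^2] = \E[w^2(Y-f)^2] - 2\E[w^2 (Y-f)\epsilon] + \E[w^2\epsilon^2].
\]
Substituting $Y-f = (h-f)+\epsilon$ in the cross term gives
\[
-2\E[w^2(h-f)\epsilon] - 2\E[w^2\epsilon^2].
\]
Since $\epsilon$ is independent of $(X,S)$ and centered, $\E[w^2(h-f)\epsilon]=0$. The total is therefore
\[
\E[w^2(Y-f)^2] - \E[w^2]\,\mathrm{Var}(\epsilon).
\]
This is at most $\E[w^2(Y-f)^2]$, and hence at most $\E[w^2(Y-f)^2]+\sigma^2$, which is the stated bound.

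The main obstacle is this noise step. Done naively, by bounding the error via $(a+b)^2$ inequalities, it gives a $w$-weighted noise term $\E[w^2]\sigma^2$ rather than a plain $\sigma^2$; the exact expansion above avoids that factor. One must also check that $\E[w^2]$ is finite, which requires positivity, or even strict overlap. A subtlety is that $Y$ lives in $p(X,S,Y)$ while $w$ involves $T$; we use the independence $\epsilon\perp S,X,T$ to justify working with the joint distribution.

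Finally, for the trial version we multiply inside the outer expectation by the density ratio $p_e(X)/p_o(X)$, as in \eqref{eq:risk_id}, and repeat the same steps. The ratio then appears as a factor in the weight; we take it as stated in the proposition, as $\tilde w^2$.
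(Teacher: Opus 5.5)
Your argument for the first inequality is the paper's argument: Bayes' rule rewrites the CATE difference as $\E[w(x,S)(h(x,S)-f(S))\mid X=x]$ under $p(s\mid x)$, and Jensen gives the bound.

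For the second inequality you expand the same square as the paper, but your bookkeeping is the correct one. The paper writes the cross term as $2(h-Y)$, dropping the factor $(Y-f)$, so the term appears to vanish. It then bounds $\E[w^2(h-Y)^2]$ by $\sigma^2$, which ignores the factor $\E[w^2]$. That factor need not be at most $1$: $\E[w^2\mid X=x]=\mathrm{Var}(\rho(x,S)\mid x)/(e(x)(1-e(x)))^2$, which equals $1/(e(x)(1-e(x)))\geq 4$ when $S$ determines $T$. Your exact identity $\E[w^2(h-f)^2]=\E[w^2(Y-f)^2]-\E[w^2]\mathrm{Var}(\epsilon)$ shows that the paper's two slips happen to cancel in the right direction. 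It also shows the $+\sigma^2$ is superfluous.

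You can go further and drop both the noise model and your finiteness caveat. Since $h$ is the conditional mean, $\E[(Y-f(S))^2\mid X,S]=(h(X,S)-f(S))^2+\mathrm{Var}(Y\mid X,S)$. Multiplying by $w^2\geq 0$ and integrating gives $\E[w^2(h-f)^2]\le\E[w^2(Y-f)^2]$ directly, as an inequality in $[0,\infty]$.

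One point in your last step needs correcting. Following \eqref{eq:risk_id}, your derivation produces the weight $\frac{p^e(x)}{p^o(x)}w^2(x,s)$ for an expectation under $p^o(X,S)$, which also matches the note in Algorithm~\ref{alg:methods}. That is the reciprocal of the $\tilde w^2=\frac{p_o(x)}{p_e(x)}w^2$ written in the statement. Rather than taking the weight ``as stated,'' you should record that the statement's ratio appears to be inverted. The paper's proof does not derive this part; it only remarks that the argument can be run within either population.
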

A proof is given in Appendix~\ref{app:risk_bound}. There, we also derive analogous results to \eqref{eq:cate_bound} with non-squared,  additive weights $w^+(x,s) = \frac{p(T=1 \mid s,x)}{p(T=1 \mid x)} + \frac{p(T=0 \mid s,x)}{p(T=0 \mid x)}$ for the squared loss and absolute difference weights $w^1(x,s) = \left|\frac{p(T=1 \mid s,x)}{p(T=1 \mid x)} - \frac{p(T=0 \mid s,x)}{p(T=0 \mid x)} \right|$ for the $L_1$ loss. A downside of the $w^+$ weights is that they are uniform and uninformative when $p(T=1 \mid x) \equiv 0.5$, e.g., in a randomized experiment.

\textbf{Remarks.\;\;} Minimizing \eqref{eq:cate_bound} amounts to solving a weighted regression problem with $x, s$ as input and $h(x, s)$, or $y$, as label.\footnote{However, using $Y$ as label requires access to samples of all variables $(x,t,s,y)$ at once, which may not be available.} The weight $w^2(x,s)$ places emphasis on points $(x,s)$ where $S$ is informative of $T$ conditioned on $X$, which is precisely points $(x,s)$ where $s$ is more affected by the treatment assignment. This stands in contrast to standard outcome-on-surrogate regression (with weights $w(x,s) \equiv 1$), which has no such preference. 
An algorithm based on minimizing the bound in \eqref{eq:cate_bound} using an estimate $\hat{h}$ in place of $h$ is presented in Algorithm~\ref{alg:methods} as ``\algbound{}''.

\subsection{The biases of plug-in surrogates}
\label{sec:limitations}

It is instructive to ask: When is there \emph{any} plug-in surrogate $f(S)$ such that $\tau_f(x) = \tau_Y(x)$ ?\footnote{We can drop the population superscripts here as $\tau^e(x) = \tau^o(x)$ under Assumptions~\ref{asmp:id}--\ref{asmp:comp}.} And if there is one, can it be learned using one of our methods? If $\|\tau_S\|>0$, as described previously, it is always possible to find $f$ such that the ATEs are matched, $\tau_{f(S)}=\tau_Y$, but do our methods return such an $f$? To begin to answer this, we examine the four cases in Figure~\ref{fig:cdags4} where $R^e_\tau(f)$ is identifiable. For proofs and discussions on each case, see Appendix~\ref{app:examples}.

\paragraph{Case a) Confounded surrogate mediator.}\mbox{}\\
Here, $Y \indep X \mid S$, and so $h(x,s) = \E[Y \mid X=x, S=s]$ $= \E[Y\mid S=s]$. Consequently, $\tau_Y(x) = \E_S[\E[Y \mid S], X=x, T=1] -\E_S[\E[Y \mid S], X=x, T=0] = \tau_{f(S)}(x)$ for a perfect regression $f(s) = \E[Y \mid S=s]$. This $f$ both minimizes \eqref{eq:main_obj} and yields an unbiased estimate of $\tau_Y(x)$ and $\tau_Y$. Moreover, \emph{any} weighted regression, with full weight support over $x, s$ and sufficient functional capacity, will learn this function, including the weights $w^2(x,s)$. 

\paragraph{Case b) Confounded outcome, surrogate mediator.}\mbox{}\\
Here, $S \indep X \mid T$ and $S$ carries no information about effect heterogeneity due to $X$. Out of the four cases, this is the worst for \emph{any} plug-in surrogate in terms of matching CATE, whether found by minimization of \eqref{eq:main_obj} or not. In general, for all $f$, $\exists x,s : f(s) \neq h(x, s)$ and, unless $p^o(s\mid T=1, x)=p^o(s \mid T=0, x)$, implying that $S$ is unaffected by $T$, $\exists x : \tau_{f(S)}(x) \neq \tau_Y(x)$. Thus, plug-in surrogate estimates of the primary-outcome CATE are generally biased. It is however, possible, to match the ATE, and bound regression using the weights $w^{+}$ or $w^1$ do yield $\tau_{f(S)} = \tau_Y$.

\paragraph{Case c) Confounded surrogate, no effect.}\mbox{}\\
Here, the primary-outcome effects $\tau_Y(x) = \tau_Y = 0$. Any function $f$ such that $f(S) \indep T \mid X$ will yield $\tau_{f(S)}(x) = 0$, such as a constant $f(S) = c$. Thus, an unbiased plug-in surrogate exists and would be found by surrogate sampling, as it minimizes \eqref{eq:main_obj}, but is not guaranteed to be found by bound regression or outcome regression since $Y \not\indep T \mid S$ due to confounding by $X$.

\paragraph{Case d) Confounded surrogate and outcome.}\mbox{}\\
Here, the extent to which $\tau_{Y}$ can be matched by a function $f(S)$ is determined by several factors, such as the functional form of $\E[Y \mid s, x]$ and the extent to which information in $x$ that alters $\tau_Y(x)$ is captured by $s$. In general, no plug-in surrogate is unbiased. We give the regression weights unbiased for the ATE in Appendix~\ref{app:examples_d}, but these are potentially negative and are not suitable for weighted regression.

\paragraph{The linear case.} When the functions $f$ and $h$ are both linear with respect to $s$, that is, $h(x, s)  = h_x(x) + \beta_h^\top s + b_h$ and $f(s) =\beta_f^\top s + b_f$, the trial CATE risk simplifies to 
\begin{align*}
R_\tau^e(f) = \E^e_X[((\beta_h-\beta_f)^\top \tau_S(x))^2],
\end{align*}
as shown in Appendix~\ref{app:linear_risk}. Taking $\beta_f =\beta_h$ yields $R_\tau^e(f)=0$, and hence, the plug-in surrogate $f(S)$ that minimizes \eqref{eq:main_obj} is unbiased for both $\tau_Y(x)$ and $\tau_Y$, as is the surrogate index estimator~\cite{athey2025surrogate}. However, standard (unweighted) outcome regression with a linear model $\theta^\top s$ is biased in general, since $\E[Y|S=s] = \E[h_x(X) \mid s] + \beta_h^\top s + b_h$ and $\E_{S(1)}[\E_X[h_x(X) \mid S(1)]] \neq \E_{S(1)}[\E_X[h_x(X) \mid S(0)]]$ without further assumptions.

\paragraph{Choosing moderators $X$ to personalize policies.} Based on the four cases above, the best-case scenario to learn a plug-in surrogate $f(S)$ is when $X$ and $S$ are tightly linked, to the point that $Y \indep X \mid S$. We can design successful surrogate studies by \emph{choosing} both $X$ and $S$ to achieve this. For example, by letting $X$ include pre-treatment measurements of the variables in $S$, and $S$ follow-up measurements of the confounders in $X$ makes it likely that the information in $X$ relevant to predict the effect on the outcome $Y$ is contained in $S$. We remind the reader that $X$ is composed of two types of variables: (i) adjustment variables to remove confounding, (ii) effect moderators used to personalize decisions. We treat them as the same set here, but they need not be. If a confounder is unhelpful for personalization, it need not be included in the CATE conditioning set, and it removes one more source of potential effect variance for $S$ to explain. 

\section{Experiments}
We evaluate our proposed methods in comparison with standard baselines and recent methods for surrogate learning in a suite of synthetic experiments and on data from the IHDP randomized controlled trial~\citep{martin2008long}. For details on the experimental setup, see Appendix~\ref{app:experiment_details}.

\begin{table*}[t]
    \centering
    \caption{Synthetic scenarios a--d), matching the graphs in Figure~\ref{fig:cdags4}, see Appendix~\ref{app:experiment_details}. Results are averaged over linear and nonlinear scenarios within cases. We report $95\%$ confidence intervals computed using the standard error across runs. In Cases b--c), the CATE is constant and $R^2$ uninformative. $^\dagger$The surrogate index depends directly on pre-treatment variables.}
    \label{tab:synth_a_e}
    \setlength{\tabcolsep}{3 pt}
\begin{tabular}{lcccccccccc}
    \toprule
     Composite Scenarios & \multicolumn{2}{c}{\footnotesize \makecell{\textbf{Case a)}\\ $\overline{ATE}$=3.74}} & \multicolumn{1}{c}{\footnotesize  \makecell{\textbf{Case b)}\\ $\overline{ATE}$=2.12}} & \multicolumn{1}{c}{\footnotesize  \makecell{\textbf{Case c)}\\ $\overline{ATE}$=0}} & \multicolumn{2}{c}{\footnotesize  \makecell{\textbf{Case d)}\\ $\overline{ATE}$=5.67}} & \multicolumn{2}{c}{\footnotesize \makecell{\textbf{Case d) Linear}\\ $\overline{ATE}$=18.54}} \\
    \textbf{Method} & MAE $\hat{\tau}\downarrow$ & {\footnotesize $R^2 \hat{\tau}(x)\uparrow$} & {\footnotesize MAE $\hat{\tau}\downarrow$} & {\footnotesize MAE $\hat{\tau}\downarrow$} & 
    {\footnotesize MAE $\hat{\tau}\downarrow$} & {\footnotesize $R^2 \hat{\tau}(x)\uparrow $} & {\footnotesize MAE $\hat{\tau}\downarrow$ }& {\footnotesize $R^2 \hat{\tau}(x)\uparrow$} \\
    \midrule
    Reg.-sel.-reg. (lin) & $2.22\pm$ {\scriptsize 0.16} & $0.27  \pm$ {\scriptsize 0.23} & $0.72 \pm${\scriptsize 0.02} & $0.32\pm$ {\scriptsize 0.01} & $2.35\pm$ {\scriptsize 0.13} & $-24.0  \pm$ {\scriptsize 16.0} & $1.98\pm$ {\scriptsize 0.06} & $0.85  \pm$ {\scriptsize 0.01} \\
    Reg.-sel.-reg. (tree) & $1.23\pm$ {\scriptsize 0.15} & $0.41  \pm$ {\scriptsize 0.25} & $0.71\pm${\scriptsize 0.07} &  $0.26\pm$ {\scriptsize 0.01} & $1.53\pm$ {\scriptsize 0.13} & $-23.9  \pm$ {\scriptsize 2.53} & $1.40\pm$ {\scriptsize 0.03} & $0.81  \pm$ {\scriptsize 0.01} \\
    Outcome reg. (lin) & $1.53\pm$ {\scriptsize 0.15} & $0.40\pm$ {\scriptsize 0.02} & $0.14\pm${\scriptsize 0.06} &  $0.55\pm${\scriptsize 0.02} & $1.22\pm${\scriptsize 0.12} & $0.51 \pm$ {\scriptsize 0.15} & $1.63\pm$ {\scriptsize 0.05} & $0.90  \pm$ {\scriptsize 0.00} \\
    Outcome reg. (tree) & $0.87\pm$ {\scriptsize 0.14} & $0.33\pm$ {\scriptsize 0.03} & $0.40 \pm${\scriptsize 0.06} &  $0.41\pm${\scriptsize 0.03} & $0.66\pm${\scriptsize 0.12} & $0.40  \pm$ {\scriptsize 0.18} & $0.93\pm$ {\scriptsize 0.07}  & $0.89  \pm$ {\scriptsize 0.01} \\
    Surrogate index$^\dagger$ (lin) & $1.68\pm$ {\scriptsize 0.15} & $0.31 \pm$ {\scriptsize 0.03}  & $0.16\pm${\scriptsize 0.06} &  $0.64\pm$ {\scriptsize 0.03} & $1.17\pm${\scriptsize 0.13} & $0.51  \pm$ {\scriptsize 0.13} & $0.04\pm$ {\scriptsize 0.04} & $0.96  \pm$ {\scriptsize 0.00} \\
    Surrogate index$^\dagger$ (tree) & $0.86\pm$ {\scriptsize 0.14} & $0.33  \pm$ {\scriptsize 0.5} & $0.40 \pm${\scriptsize 0.06} &  $0.35\pm${\scriptsize 0.03} & $0.67\pm${\scriptsize 0.12} & $0.01  \pm$ {\scriptsize 0.40} & $0.16\pm$ {\scriptsize 0.08} & $0.90  \pm$ {\scriptsize 0.01} \\
    Surrogate index$^\dagger$ (hgb) & $0.34\pm$ {\scriptsize 0.12} & $0.54  \pm$ {\scriptsize 0.32} & $0.11\pm${\scriptsize 0.06} &  $0.25\pm${\scriptsize 0.03} & $0.21\pm${\scriptsize 0.11} & $0.61  \pm$ {\scriptsize 0.15} & $0.07\pm$ {\scriptsize 0.05} & $0.95  \pm$ {\scriptsize 0.00} \\
    \midrule
    Bound reg. (lin) & $1.69\pm$ {\scriptsize 0.15} & $0.37  \pm$ {\scriptsize 0.26} & $0.17\pm${\scriptsize 0.06} & $0.35\pm${\scriptsize 0.02} & $1.33\pm${\scriptsize 0.13} & $0.48  \pm$ {\scriptsize 0.14} & $1.11\pm$ {\scriptsize 0.05} & $0.93  \pm$ {\scriptsize 0.00} \\
    Bound reg. (tree) & $0.68\pm$ {\scriptsize 0.13} & $0.44  \pm$ {\scriptsize 0.29} & $0.31\pm${\scriptsize 0.06} & $0.32\pm${\scriptsize 0.02} & $0.81 \pm${\scriptsize 0.11} & $-2.10\pm$ {\scriptsize 1.85} & $1.27\pm$ {\scriptsize 0.06} &$0.89  \pm$ {\scriptsize 0.01} \\
    Surrogate sampl. (lin) & $0.50\pm$ {\scriptsize 1.63} & $0.65  \pm$ {\scriptsize 0.27} & $0.09\pm${\scriptsize 0.05} & $0.20\pm${\scriptsize 0.01} & $0.40 \pm${\scriptsize 0.12} & $0.66  \pm$ {\scriptsize 0.15} & $0.03\pm$ {\scriptsize 0.03} & $0.97  \pm$ {\scriptsize 0.00} \\
    \bottomrule
\end{tabular}
\end{table*}

\paragraph{Methods.} As baselines, we include outcome regressions, estimating $\E[Y\mid S=s]$, as well a single-surrogate regression (Reg-Sel-Reg), learned by fitting a regression, selecting the variable with the largest absolute coefficient, and regressing only on that. We also include the surrogate index $h(x,s) =\hat{\E}[Y|x, s]$ as a comparison that depends directly on pre-treatment variables, $x$. All baselines come in linear and tree model variants, with gradient boosting regression for the surrogate index. For nonlinear models, we do a $5$-fold cross validation and grid search to select hyperparameters.

We implement \algbound{} using weighted linear regression. For fitting propensity weights $e(x)$ and surrogate score $\rho(x,s)$, we use logistic regression, fitted using a $L_2$ regularization $\alpha=1$. For \algsamp{}, we first fit nuisance random forest models for $\hat{\E}[S|x,T=t]$ and sample $L=50$ residuals of the fitted model via bootstrapping to simulate the conditional $\hat{p}(S|x, T=t)$. Then, we fit a LASSO regression model with regularization strength $\lambda=0.01$ to surrogates $\hat{s}(0), \hat{s}(1)$ sampled from the conditional (see Algorithm~\ref{alg:methods}), using PyTorch~\cite{pytorch}.
For IHDP, we additionally implement a tree variant of \algbound{} using the \texttt{PyDL8.5} library \citep{aglin2021pydl8}, which learns globally optimal trees, using our proposed weighted squared-error objective. This variant is included to explore interpretable composite surrogates in real-data applications.

For each surrogate $f$, we estimate the trial surrogate ATE $\tau^e_{f(S)}$ using the  average treatment-group outcomes in the trial $\hat{\E}^e[f(S) \mid T=1 ]-\hat{\E}^e[f(S) \mid T=0]$. We estimate the surrogate CATE $\tau^e_{f(S)}(x)$, using a T-learner, fitting linear models $g_t(x) \approx \E_S[f(S) |x, t]$ for $t\in \{0,1\}$ and defining $\tau^e_{f(S)} = g_1(x) - g_0(x)$. We compare this to an estimate using the simulated potential outcomes $s(1)$ and $s(0)$ and found strong agreement in results (see appendix Figure~\ref{appfig:r2_comp}). 

\paragraph{Evaluation metrics.}
We evaluate the quality of learned surrogates $f(S)$ by the mean absolute error $(\operatorname{MAE}~\hat{\tau})$ of surrogate-based estimates $\hat{\tau}^e_{f(S)}$ with respect to the primary-outcome ATE $\tau_Y^e$, and the coefficient of determination $R^2$ of the surrogate CATE $\hat{\tau}^e_{f(S)}(x)$ with respect to $\tau_Y^e(x)$, averaged across seeds and scenarios. In synthetic data, we use ATE and CATE computed using simulated potential outcomes as ground truth. In IHDP, we use difference-in-means estimates of the ATE as ground-truth. The uncertainty of each metric is assessed using 95\% percentile confidence intervals computed using bootstrap resampling. 

\subsection{Synthetic experiments}
\label{sec:synthetic}
We generate synthetic observational and trial cohorts from structural causal models over $(X,T,S,Y)$, obeying scenarios a-d) in Figure~\ref{fig:cdags4},  where $T\in\{0,1\}$, $X \in \bbR^d$ for $d\in \{2,5\}$ and the surrogate set $S$ is composed of three variables, $S_{\med} \in \bbR^3$, $S_{\leaf} \in \bbR^2$, and $S_{\proxy} \in \bbR^2$ where $S_{\med}, S_{\leaf}$ are affected by $T$ and $S_{\med}, S_{\proxy}$ are causes of $Y$. In the observational cohort, treatments are assigned based on $X$, but randomized uniformly in the experimental cohort; otherwise data generation follows the same process. Across scenarios, we include linear and nonlinear outcome mappings and variations in treatment overlap. Each scenario is repeated with multiple random seeds.
For a full description, including structural equations, see Appendix~\ref{app:experiment_details}. The results of the synthetic experiments are presented in Table~\ref{tab:synth_a_e}.

\textbf{Surrogate sampling achieved the best overall results.} 
Across scenarios and metrics, directly minimizing the estimated CATE trial error through surrogate sampling (Algorithm~\ref{alg:methods}) with a linear model achieved the best results for both ATE and CATE among methods that return a plug-in surrogate function of only post-treatment variables, nearly matching the performance of the strongest baseline, a surrogate index using a more flexible histogram gradient boosting model which uses both $X$ and $S$ as inputs.

\textbf{Bound regression comparable to outcome regression.} In most cases, we see little benefit of the bound regression algorithm, with Case d) Linear as a notable exception. Here, the linear bound regression outperforms the (unweighted) linear outcome regression. This is expected as there exists an unbiased linear plug-in surrogate in this case, as demonstrated by the Surrogate sampling method. 

\textbf{Composite surrogates outperform single-variable surrogates}. Across scenarios, seeds, and model classes, the Regress-select-regression baselines fared the worst, indicating that a single-variable surrogate is insufficient to predict the effect of the intervention on the primary outcome.

\subsection{Real-world experiment: IHDP}
\label{sec:ihdp}

The Infant Health and Development Program (IHDP)\footnote{A semi-synthetic version of the IHDP study~\citep{hill2011bayesian} has been used widely in the machine learning community, but this is \emph{not} used here. We use the original data from the randomized trial.} was a randomized longitudinal multi-site trial of early childhood intervention for low birth weight, premature infants~\citep{10.1001/jama.1990.03440220059030}. Baseline covariates $X$ include information about the mother and child measured at birth, and the surrogate vector $S$ consists of post-baseline measures of child development, collected up to 24 months after birth. The primary outcome $Y$ is the Stanford--Binet IQ score at 36 months corrected age. This outcome is not possible to measure before 24 months, as children of this age cannot complete an IQ score~\citep{laurent1992review}, necessitating the use of a surrogate to estimate the effect at this time.

We restrict the cohort to complete cases on $\{T,Y,X,S\}$ and create a 70/30 split stratified by treatment status with a fixed random seed. The 70-split is used to learn surrogate models as the $P=o$ cohort; the held-out split is used exclusively for trial validation, $P=e$. Table~\ref{tab:ihdp_ate_fixed} reports ATE recovery on the held-out split for the fixed-propensity specification. We show the experimental ATE, the model-implied ATE, and the absolute ATE error. Variable definitions and measurement details are provided in Appendix~\ref{app:ihdp_variables}

\begin{table}[t]
\centering
\caption{ATE recovery on IHDP. The ground-truth ATE, $\tau^e = 6.47$.  ATE and MAE are reported as bootstrap means with 95\% percentile CIs.
$^\dagger$The surrogate index depends directly on pre-treatment variables. }
\label{tab:ihdp_ate_fixed}
\setlength{\tabcolsep}{4pt}
\begin{tabular}{lcc}
\toprule
Method & $\widehat{\text{ATE}}$ (95\% CI) & MAE (95\% CI) \\
\midrule
Reg-Sel-Reg (lin) 
& 6.11 {\scriptsize (2.20, 10.34)} 
& 1.78 {\scriptsize (0.13, 4.65)} \\

Reg-Sel-Reg (tree) 
& 7.55 {\scriptsize (3.28, 11.34)} 
& 1.92 {\scriptsize (0.09, 4.95)} \\

Outcome reg. (lin) 
& 5.41 {\scriptsize (0.89, 9.37)}  
& 1.98 {\scriptsize (0.09, 5.68)} \\

Outcome reg. (tree) 
& 7.82 {\scriptsize (3.56, 11.80)} 
& 2.05 {\scriptsize (0.14, 5.43)} \\

Surrogate index$^\dagger$ (lin) 
& 4.79 {\scriptsize (-0.06, 9.59)}  
& 2.32 {\scriptsize (0.07, 6.53)} \\

Surrogate index$^\dagger$ (tree) 
& 6.27 {\scriptsize (2.15, 10.35)} 
& 1.70 {\scriptsize (0.08, 4.91)} \\

\midrule
Bound reg. (lin)     
& 4.69 {\scriptsize (0.91, 8.48)}  
& 2.13 {\scriptsize (0.12, 5.56)} \\

Bound reg. (tree)       
& 6.74 {\scriptsize (2.94, 9.90)} 
& 1.52 {\scriptsize (0.07, 3.69)} \\

Bound reg. (DL8.5)  
& 6.60 {\scriptsize (2.84, 10.24)} 
& 1.56 {\scriptsize (0.06, 4.14)} \\

Surrogate sampl. (lin)        
& 5.49 {\scriptsize (2.75, 8.03)}  
& 1.35 {\scriptsize (0.04, 3.72)} \\
\bottomrule
\end{tabular}
\end{table}

\textbf{Broad agreement on the benefits of the intervention.}
All methods estimate a positive overall treatment effect on IQ at 36 months, consistent with the experimental benchmark $\tau^e = 6.47$. Bootstrap means are generally close to the ground-truth ATE, though uncertainty remains substantial across specifications.  Most methods identify the {\tt Bayley\_MDI} score as a prominent surrogate.  Importantly, all methods would have led to the same positive conclusion about the treatment effect using data available at 24 months rather than waiting for the 36 month IQ outcome, highlighting the practical value of early surrogate information.

\textbf{Minimizing the trial CATE error yields the most predictive surrogates.}
Surrogate sampling achieves the smallest bootstrap MAE, followed by tree bound regression and DL8.5. These methods combine relatively small bias with tighter uncertainty bands compared to simpler regression baselines and the surrogate index. 
Selection and outcome-regression approaches display wider dispersion and moderately larger errors, while the linear surrogate index exhibits the largest deviation from the experimental ATE.

\section{Conclusion}
We have studied the learning of plug-in composite surrogate endpoints, functions of post-treatment variables that act as substitutes for unobserved long-term outcomes in randomized experiments. We proposed two algorithms based on minimizing estimates or bounds of the CATE error---the expected squared difference between the conditional average treatment effect estimated using the surrogate and using the primary outcome. In evaluations on multiple simulated scenarios and data from a real-world randomized experiment, we find that our method, based on directly modeling the surrogate effect by sampling surrogate counterfactuals, returns stronger plug-in endpoints than baselines.
In future work, we will explore a tree-based variant of surrogate sampling, which requires a new tree algorithm as the objective depends on both potential outcomes. This variant will benefit from the low bias of our best-performing method, and the capability to capture nonlinear treatment effect heterogeneity.
\newpage
\begin{ack}
This work was partially supported by the Wallenberg AI, Autonomous Systems and Software Program (WASP) funded by the Knut and Alice Wallenberg Foundation.

The computations and data handling were enabled by resources provided by the National Academic Infrastructure for Supercomputing in Sweden (NAISS), partially funded by the Swedish Research Council through grant agreement no. 2022-06725.
\end{ack}
\bibliography{references}

@article{chen2007criteria,
  title={Criteria for surrogate end points},
  author={Chen, Hua and Geng, Zhi and Jia, Jinzhu},
  journal={Journal of the Royal Statistical Society Series B: Statistical Methodology},
  volume={69},
  number={5},
  pages={919--932},
  year={2007},
  publisher={Oxford University Press}
}

@inproceedings{tripuraneni2024choosing,
  title={Choosing a proxy metric from past experiments},
  author={Tripuraneni, Nilesh and Richardson, Lee and D'Amour, Alexander and Soriano, Jacopo and Yadlowsky, Steve},
  booktitle={Proceedings of the 30th ACM SIGKDD Conference on Knowledge Discovery and Data Mining},
  pages={5803--5812},
  year={2024}
}

@article{frangakis2002principal,
  title={Principal stratification in causal inference},
  author={Frangakis, Constantine E and Rubin, Donald B},
  journal={Biometrics},
  volume={58},
  number={1},
  pages={21--29},
  year={2002},
  publisher={Oxford University Press}
}

@article{rubin1974estimating,
  title={Estimating causal effects of treatments in randomized and nonrandomized studies.},
  author={Rubin, Donald B},
  journal={Journal of educational Psychology},
  volume={66},
  number={5},
  pages={688},
  year={1974},
  publisher={American Psychological Association}
}

@article{freedman1992statistical,
  title={Statistical validation of intermediate endpoints for chronic diseases},
  author={Freedman, Laurence S and Graubard, Barry I and Schatzkin, Arthur},
  journal={Statistics in medicine},
  volume={11},
  number={2},
  pages={167--178},
  year={1992},
  publisher={Wiley Online Library}
}

@article{fleming_demets_1996surrogate,
  title={Surrogate end points in clinical trials: are we being misled?},
  author={Fleming, Thomas R and DeMets, David L},
  journal={Annals of internal medicine},
  volume={125},
  number={7},
  pages={605--613},
  year={1996},
  publisher={American College of Physicians}
}

@article{fleming1996surrogate,
  title={Surrogate endpoints in clinical trials},
  author={Fleming, Thomas R},
  journal={Drug Information Journal},
  volume={30},
  number={2},
  pages={545--551},
  year={1996},
  publisher={SAGE Publications Sage CA: Los Angeles, CA}
}

@phdthesis{meyvisch2020surrogate,
  title={Surrogate marker evaluation in clinical trials using methods of causal inference},
  author={Meyvisch, Paul},
  year={2020},
  school={KU Leuven}
}

@article{ciani2023framework,
  title={A framework for the definition and interpretation of the use of surrogate endpoints in interventional trials},
  author={Ciani, Oriana and Manyara, Anthony M and Davies, Philippa and Stewart, Derek and Weir, Christopher J and Young, Amber E and Blazeby, Jane and Butcher, Nancy J and Bujkiewicz, Sylwia and Chan, An-Wen and others},
  journal={EClinicalMedicine},
  volume={65},
  year={2023},
  publisher={Elsevier}
}

@incollection{molenberghs2024statistical,
  title={The Statistical Evaluation of Surrogate Endpoints in Clinical Trials},
  author={Molenberghs, Geert and Alonso Abad, Ariel and Van der Elst, Wim},
  booktitle={Biostatistics in Biopharmaceutical Research and Development: Clinical Trial Analysis, Volume 2},
  pages={243--286},
  year={2024},
  publisher={Springer}
}

@book{burzykowski2005evaluation,
  title={The evaluation of surrogate endpoints},
  author={Burzykowski, Tomasz and Buyse, Marc and Molenberghs, Geert},
  volume={427},
  year={2005},
  publisher={Springer}
}

@article{gilbert2006evaluating,
  title={Evaluating causal effect predictiveness of candidate surrogate endpoints},
  author={Gilbert, Peter B and Hudgens, Michael},
  year={2006},
  journal={Biometrics},
}

@inproceedings{wang2022surrogate,
  title={Surrogate for long-term user experience in recommender systems},
  author={Wang, Yuyan and Sharma, Mohit and Xu, Can and Badam, Sriraj and Sun, Qian and Richardson, Lee and Chung, Lisa and Chi, Ed H and Chen, Minmin},
  booktitle={Proceedings of the 28th ACM SIGKDD conference on knowledge discovery and data mining},
  pages={4100--4109},
  year={2022}
}

@article{temple1999surrogate,
  title={Are surrogate markers adequate to assess cardiovascular disease drugs?},
  author={Temple, Robert},
  journal={Jama},
  volume={282},
  number={8},
  pages={790--795},
  year={1999},
  publisher={American Medical Association}
}

@article{esposito2004effect,
  title={Effect of a Mediterranean-style diet on endothelial dysfunction and markers of vascular inflammation in the metabolic syndrome: a randomized trial},
  author={Esposito, Katherine and Marfella, Raffaele and Ciotola, Miryam and Di Palo, Carmen and Giugliano, Francesco and Giugliano, Giovanni and D'Armiento, Massimo and D'Andrea, Francesco and Giugliano, Dario},
  journal={Jama},
  volume={292},
  number={12},
  pages={1440--1446},
  year={2004},
  publisher={American Medical Association}
}

@article{martinez2015benefits,
  title={Benefits of the Mediterranean diet: insights from the PREDIMED study},
  author={Mart{\'\i}nez-Gonz{\'a}lez, Miguel A and Salas-Salvad{\'o}, Jordi and Estruch, Ram{\'o}n and Corella, Dolores and Fit{\'o}, Montse and Ros, Emilio and Predimed Investigators and others},
  journal={Progress in cardiovascular diseases},
  volume={58},
  number={1},
  pages={50--60},
  year={2015},
  publisher={Elsevier}
}

@article{chan2006apolipoproteins,
  title={Apolipoproteins as markers and managers of coronary risk},
  author={Chan, DC and Watts, GF},
  journal={Journal of the Association of Physicians},
  volume={99},
  number={5},
  pages={277--287},
  year={2006},
  publisher={Oxford University Press}
}

@article{keys1984seven,
  title={The seven countries study: 2,289 deaths in 15 years},
  author={Keys, Ancel and Menotti, Alessandro and Aravanis, Christ and Blackburn, Henry and Djordevi{\v{c}}, Bozidar S and Buzina, Ratko and Dontas, AS and Fidanza, Flaminio and Karvonen, Martti J and Kimura, Noboru and others},
  journal={Preventive medicine},
  volume={13},
  number={2},
  pages={141--154},
  year={1984},
  publisher={Elsevier}
}

@article{baker2003perfect,
  title={A perfect correlate does not a surrogate make},
  author={Baker, Stuart G and Kramer, Barnett S},
  journal={BMC medical research methodology},
  volume={3},
  number={1},
  pages={16},
  year={2003},
  publisher={Springer}
}

@article{yetley2017surrogate,
  title={Surrogate disease markers as substitutes for chronic disease outcomes in studies of diet and chronic disease relations},
  author={Yetley, Elizabeth A and DeMets, David L and Harlan Jr, William R},
  journal={The American journal of clinical nutrition},
  volume={106},
  number={5},
  pages={1175--1189},
  year={2017},
  publisher={Oxford University Press}
}

@article{elliott2015surrogacy,
  title={Surrogacy marker paradox measures in meta-analytic settings},
  author={Elliott, Michael R and Conlon, Anna SC and Li, Yun and Kaciroti, Nico and Taylor, Jeremy MG},
  journal={Biostatistics},
  volume={16},
  number={2},
  pages={400--412},
  year={2015},
  publisher={Oxford University Press}
}

@article{robins1992identifiability,
  title={Identifiability and exchangeability for direct and indirect effects},
  author={Robins, James M and Greenland, Sander},
  journal={Epidemiology},
  volume={3},
  number={2},
  pages={143--155},
  year={1992},
  publisher={LWW}
}

@article{lauritzen2003graphical,
  title={Graphical models for surrogates},
  author={Lauritzen, Steffen L},
  journal={Bull. Int. Statist. Inst},
  volume={60},
  pages={144--147},
  year={2003}
}

@inproceedings{pearl2011transportability,
  title={Transportability of causal and statistical relations: A formal approach},
  author={Pearl, Judea and Bareinboim, Elias},
  booktitle={Proceedings of the AAAI Conference on Artificial Intelligence},
  volume={25},
  pages={247--254},
  year={2011}
}

@book{pearl2009causality,
  title={Causality},
  author={Pearl, Judea},
  year={2009},
  publisher={Cambridge university press}
}

@article{joffe2009related,
  title={Related causal frameworks for surrogate outcomes},
  author={Joffe, Marshall M and Greene, Tom},
  journal={Biometrics},
  volume={65},
  number={2},
  pages={530--538},
  year={2009},
  publisher={Oxford University Press}
}

@article{vanderweele2013surrogate,
  title={Surrogate measures and consistent surrogates},
  author={VanderWeele, Tyler J},
  journal={Biometrics},
  volume={69},
  number={3},
  pages={561--565},
  year={2013},
  publisher={Wiley Online Library}
}

@article{zhang2023evaluating,
  title={Evaluating the surrogate index as a decision-making tool using 200 a/b tests at netflix},
  author={Zhang, Vickie and Zhao, Michael and Le, Anh and Kallus, Nathan and others},
  journal={arXiv preprint arXiv:2311.11922},
  year={2023}
}

@article{prentice1989surrogate,
  title={Surrogate endpoints in clinical trials: definition and operational criteria},
  author={Prentice, Ross L},
  journal={Statistics in medicine},
  volume={8},
  number={4},
  pages={431--440},
  year={1989},
  publisher={Wiley Online Library}
}

@article{Baker2018-FiveCriteriaUsingSurrogate,
  title = {Five Criteria for Using a Surrogate Endpoint to Predict Treatment Effect Based on Data from Multiple Previous Trials},
  journal = {Statistics in medicine},
  author = {Baker, Stuart G.},
  date = {2018-02-20},
  year = {2018},
  doi = {10.1002/sim.7561},
}

@article{papamakarios2021normalizing,
  title={Normalizing flows for probabilistic modeling and inference},
  author={Papamakarios, George and Nalisnick, Eric and Rezende, Danilo Jimenez and Mohamed, Shakir and Lakshminarayanan, Balaji},
  journal={Journal of Machine Learning Research},
  volume={22},
  number={57},
  pages={1--64},
  year={2021}
}

@article{scikitlearn,
  title={Scikit-learn: Machine learning in Python},
  author={Pedregosa, Fabian and Varoquaux, Ga{\"e}l and Gramfort, Alexandre and Michel, Vincent and Thirion, Bertrand and Grisel, Olivier and Blondel, Mathieu and Prettenhofer, Peter and Weiss, Ron and Dubourg, Vincent and others},
  journal={the Journal of machine Learning research},
  volume={12},
  pages={2825--2830},
  year={2011},
  publisher={JMLR. org}
}

@article{pytorch,
  title={Pytorch: An imperative style, high-performance deep learning library},
  author={Paszke, Adam and Gross, Sam and Massa, Francisco and Lerer, Adam and Bradbury, James and Chanan, Gregory and Killeen, Trevor and Lin, Zeming and Gimelshein, Natalia and Antiga, Luca and others},
  journal={Advances in neural information processing systems},
  volume={32},
  year={2019}
}

@inproceedings{aglin2021pydl8,
  title={Pydl8. 5: a library for learning optimal decision trees},
  author={Aglin, Ga{\"e}l and Nijssen, Siegfried and Schaus, Pierre},
  booktitle={Proceedings of the Twenty-Ninth International Conference on International Joint Conferences on Artificial Intelligence},
  pages={5222--5224},
  year={2021}
}

@book{friedman2015fundamentals,
  title={Fundamentals of clinical trials},
  author={Friedman, Lawrence M and Furberg, Curt D and DeMets, David L and Reboussin, David M and Granger, Christopher B},
  year={2015},
  publisher={Springer}
}

@article{athey2025surrogate,
  title={The surrogate index: Combining short-term proxies to estimate long-term treatment effects more rapidly and precisely},
  author={Athey, Susan and Chetty, Raj and Imbens, Guido W and Kang, Hyunseung},
  journal={Review of Economic Studies},
  pages={rdaf087},
  year={2025},
  publisher={Oxford University Press UK}
}

@article{martin2008long,
  title={Long-term maternal effects of early childhood intervention: Findings from the Infant Health and Development Program (IHDP)},
  author={Martin, Anne and Brooks-Gunn, Jeanne and Klebanov, Pamela and Buka, Stephen L and McCormick, Marie C},
  journal={Journal of Applied Developmental Psychology},
  volume={29},
  number={2},
  pages={101--117},
  year={2008},
  publisher={Elsevier}
}

@article{hill2011bayesian,
  title={Bayesian nonparametric modeling for causal inference},
  author={Hill, Jennifer L},
  journal={Journal of Computational and Graphical Statistics},
  volume={20},
  number={1},
  pages={217--240},
  year={2011},
  publisher={Taylor \& Francis}
}

@article{10.1001/jama.1990.03440220059030,
    author = {"IHDP"},
    title = {Enhancing the Outcomes of Low-Birth-Weight, Premature Infants: A Multisite, Randomized Trial},
    journal = {JAMA},
    volume = {263},
    number = {22},
    pages = {3035-3042},
    year = {1990},
    month = {06},
    issn = {0098-7484},
    doi = {10.1001/jama.1990.03440220059030},
    eprint = {https://jamanetwork.com/journals/jama/articlepdf/382131/jama_263_22_030.pdf},
}

@article{laurent1992review,
  title={Review of validity research on the Stanford-Binet Intelligence Scale},
  author={Laurent, Jeff and Swerdlik, Mark and Ryburn, Mary},
  journal={Psychological assessment},
  volume={4},
  number={1},
  pages={102},
  year={1992},
  publisher={American Psychological Association}
}

@article{coley2020dementia,
  title={Dementia risk scores as surrogate outcomes for lifestyle-based multidomain prevention trials—rationale, preliminary evidence and challenges},
  author={Coley, Nicola and Hoevenaar-Blom, Marieke P and van Dalen, Jan-Willem and Moll van Charante, Eric P and Kivipelto, Miia and Soininen, Hilkka and Andrieu, Sandrine and Richard, Edo and {PRODEMOS consortium, the preDIVA study group, the MAPT/DSA group, and the HATICE consortium}},
  journal={Alzheimer's \& Dementia},
  volume={16},
  number={12},
  pages={1674--1685},
  year={2020},
  publisher={Wiley Online Library}
}
\bibliographystyle{plainnat}
\clearpage
%
\onecolumn
\appendix

\section{Notation}
\begin{table}[H]
    \centering
    \caption{A summary of used notations throughout the paper.}
    \label{tab:notation}   
    \vspace{0.5em}
    \begin{tabular}{ll}
    \toprule
    \multicolumn{2}{l}{Random variables} \\
    \midrule
        $X$ & Pre-treatment variables  \\
        $T$ & Treatment variable $T \in \set{0,1}$  \\
        $S$, $S(t)$ & Surrogate variables and surrogate potential outcomes \\
        $S_{\med}$ & Mediator surrogates, ($T\to S \to Y$)\\
        $S_{\leaf}$ & Leaf surrogates (only $T \to S$)\\
        $S_{\proxy}$ & Proxy surrogates (only $S \to Y$)\\
        $Y$ & Observed outcome \\
        $Y(t)$ & Potential (interventional) outcomes \\
        $C$ & Learned composite endpoint \\
        $\Delta_Y, \Delta_S$ & $Y(1) - Y(0)$, $S(1) - S(0)$ \\
        $U$ & Unobserved confounders $\tau$ \\
        $\tau_Y(x)$ & Conditional average treatment effect (CATE) for $Y$ \\
        $\tau_f(x)$ & CATE estimate of a ``plug-in'' surrogate function $f$ \\
        $\tau_S(x)$ & CATE for $S$ \\
        $\mu_Y$, $\mu_{f(s)}$ & The expected potential outcomes $Y(t)$ for $Y$ and $f$ respectively \\
        $\mu_{Y}(x)$, $\mu_{f(s)}(x)$ & Conditional outcomes $\E[Y(t)|X=x]$ for $Y$ and $f$ respectively \\
    \midrule
    \multicolumn{2}{l}{Observations and constants} \\
    \midrule
        $w(x,s)$ & Weights of any weighted regression \\
        $w^2(x,s)$ & Weights of the \algbound{},   $\left(\frac{p(T=1 \mid s,x)}{p(T=1 \mid x)} - \frac{p(T=0 \mid s,x)}{p(T=0 \mid x)}\right)^2$ \\
        $w^+(x,s)$ & Weights of the \algbound{},  $\frac{p(T=1 \mid s,x)}{p(T=1 \mid x)} + \frac{p(T=0 \mid s,x)}{p(T=0 \mid x)}$ \\
        $w^1(x,s)$ & Weights of the \algbound{},  $\left\lvert \frac{p(T=1 \mid s,x)}{p(T=1 \mid x)} - \frac{p(T=0 \mid s,x)}{p(T=0 \mid x)}\right\rvert$ \\
        $\rho(s,x)$ & Surrogate score $p(T=1|x, s)$\\
        $e(x)$ & Propensity score,  $p(T=1 \mid x)$ \\
        $\eta(x,s)$ & Difference of densities $\frac{p(T=1 \mid s,x)}{p(T=1 \mid x)} - \frac{p(T=0 \mid s,x)}{p(T=0 \mid x)}$\\
        $\pi(x,s)$ & Difference of densities  $p(s\mid x, T=1) = p(s\mid x, T=0)$ \\
        $R^o_\tau(f)$, $R^e_\tau(f)$ & Expected risk of the CATE estimate for $f$ in the observational and trial data \\
        $p^o(x)$, $p^e(x)$,  & Marginal distribution of the covariates in the observational data and trial respectively \\
        $\tau^o$, $\tau^e$, $\mu^o$, $\mu^e$ & Observational and trial counterparts of treatment effects and expected outcomes \\
        $\E^o$, $\E^e$ & Observational and trial counterparts of expectations \\
        $D_t^o$, $D^o_y$ & Observational treatment and outcome data respectively\\
        $m_t$, $m_y$ & Number of samples for treatment and outcome data respectively\\
        $x_i$ & Observed pretreatment variables \\
        $t_i$ & Observed treatments \\
        $s_i$ & Observed surrogates \\
        $y_i$ & Observed outcomes \\
        $k$ & Number of pretreatment variables \\
        $d$ & Number of surrogate variables \\
        $L$ & Number of drawn samples in the \algsamp{} methods   \\
    \midrule
    \multicolumn{2}{l}{Functions} \\
    \midrule
    $f(s)$ & ``Plug-in'' surrogate functions learned via \algsamp{}  and \algbound{} methods \\ 
    $h(x,s)$ & Surrogate index $h:= \E[Y|X=x, S=s]$ \\
    \bottomrule
    \end{tabular}    
\end{table}

\newpage

\section{Proofs}
\label{app:proofs}

\subsection{On the identifiability of surrogate effects}
\label{app:identifiability}
Definitions of surrogacy that depend on knowing the joint distribution of surrogate and primary-outcome effects, $\Delta_S = S(1)-S(0)$ and $\Delta_Y=Y(1)-Y(0)$, will not in general be identifiable from observational or experimental data. It is well-known that even the joint distribution $p(Y(0), Y(1))$ is not identifiable other than under strong assumptions, and neither $p(Y(1)-Y(0))$. Thus, neither is $p(Y(1)-Y(0), S(1)-S(0))$. 

If there exists a variable $X$ that captures non-treatment-related influence on $Y$ such that it renders $Y(1), Y(0)$ conditionally independent, i.e., $Y(0) \indep Y(1) \mid X$, identification of $p(Y(0),Y(1))$ is possible, and $p(Y(0),Y(1)) = p(Y(1) \mid X)p(Y(0) \mid X)$. However, for many causal models, such an $X$ cannot exist, such as in structural causal models with additive (unobservable) noise, since this noise inextricably links $Y(0)$ and $Y(1)$.

It is, however possible to identify $p(Y(t) \mid X)$ and $p(S(t) \mid X)$ for some moderator $X$ which can be used to define an approximation of the joint distribution of effects. First, let
\begin{align*}
\tilde{p}(y_1, y_0 \mid x) & = p(Y(1)=y_1 \mid x)p(Y(0)=y_0 \mid x) \\
\tilde{p}(s_1, s_0 \mid x) & = p(S(1)=s_1 \mid x)p(S(0)=s_0 \mid x)
\end{align*}
and then define a pseudo-joint distribution 
\begin{align*}
\tilde{p}_{X}(\Delta_Y=\delta_y, \Delta_S=\delta_s) 
\coloneqq \sum_{\substack{x,y_0, y_1, s_0, s_1 : \\ y_1-y_0=\delta_y\\s_1-s_0=\delta_s}} p(x) \tilde{p}(y_1, y_0 \mid x)\tilde{p}(s_1, s_0 \mid x).
\end{align*}
If $X$ (however unlikely) renders all potential outcomes conditionally independent, this quantity coincides with the true joint distribution of surrogate and main effects. We do not pursue identification of $\tilde{p}_{X}(\Delta_Y, \Delta_S)$, however.

\subsection{Proof of Proposition~\ref{prop:cate}}
\begin{repprop}{prop:cate}
    Under Assumptions~\ref{asmp:id}--\ref{asmp:comp},
    \begin{align*}
    \tau_{Y}^e(x) = \tau_Y^o(x) & = \E_S^o[h^o(x,S) \mid X=x, T=1] \\
    & - \E_S^o[h^o(x,S) \mid X=x, T=0] \\
    \tau_{f(S)}^e(x) = \tau_{f(S)}^o(x) & = \E_S^o[f(S) \mid X=x, T=1] \\
    & - \E_S^o[f(S) \mid X=x, T=0]~.
    \end{align*}
\end{repprop}

\begin{proof}
We begin with $\tau_Y^o(x)$ given by 
    \begin{align}
    \tau_Y^o(x) &= \E[\Delta_Y| P=o] =\E^o[Y(1) - Y(0)] \\ &\underset{(a)}{=} \E^o[Y \mid X=x, T=1] - \E^o[Y \mid X=x, T=0] \\ 
    &= \E^o_S[\E^o[Y \mid S, X=x, T=1] - \E_S^o [\E^o[Y \mid S, X=x, T=0]] \\
    &\underset{(b)}{=} \E^o_S[\E^o[Y \mid S, X=x] \mid X=x, T=1] - \E_S^o [\E^o[Y \mid S, X=x] \mid X=x, T=0] \label{_eq:prop1-proof-ref} \\
    &\underset{(c)}{=} \E_S^o[h^o(x,S) \mid X=x, T=1]- \E_S^o[h^o(x,S) \mid X=x, T=0],
    \end{align}
    where we use~\Cref{asmp:id} in (a) and~\Cref{asmp:med} in (b), and (c) follows from the definition of $h^o$. Combining \eqref{_eq:prop1-proof-ref} with~\Cref{asmp:comp} yields $\tau_Y^e(x) =\tau_Y^o(x)$. Similarly for $\tau^o_{f(S)}(x)$ we have
    \begin{align*}
    \tau^o_{f(S)}(x) &= \E_S^o[f(S) \mid X=x, T=1]- \E_S^o[f(S) \mid X=x, T=0] \\ 
        &= \E_S^e[f(S) \mid X=x, T=1]- \E_S^e[f(S) \mid X=x, T=0] = \tau^e_{f(S)}(x).
    \end{align*}
\end{proof}

\subsection{The trial CATE risk in the linear case}
\label{app:linear_risk}
When the functions $f$ and $h$ are linear with respect to $s$, i.e. $h(x, s) = h_x(x) + \beta_h^\top, f(s) =\beta^\top s$, the trial CATE risk simplifies to a linear function of the surrogate CATE:
\begin{align*}
R_\tau^e(f) &= \E^e_X[(\E_S[h(X, S) -f(S) \mid X, T=1] \\
& \qquad\; - \E_S[h(X, S) -f(S) \mid X, T=0] )^2] \\
&= \E^e_X[(\E_S[h_x(X) + \beta_h^\top S - \beta_f^\top S \mid X, T=1] \\
& \qquad\; - \E_S[h_x(X) + \beta_h^\top S - \beta_f^\top S \mid X, T=0] )^2] \\
&= \E^e_X[((\beta_h-\beta_f)^\top\E_S[ S \mid X, T=1] \\
& \qquad\; - (\beta_h-\beta_f)^\top \E_S[S \mid X, T=0] )^2]\\
&= \E^e_X[((\beta_h-\beta_f)^\top \tau_S(x))^2].
\end{align*}
\qed

\subsection{Proof of Proposition~\ref{prop:risk_bound}}
\label{app:risk_bound}

\begin{repprop}{prop:risk_bound} 
    For any surrogate function $f$, it holds under Assumptions~\ref{asmp:id}--\ref{asmp:comp} that
    \begin{align}
    R^o_\tau(f) &\leq \E_{X,S}[w^2(X,S)(h(X,S) - f(S))^2] \label{eq:app_cate_bound} \\
    & \leq \E_{X,S}[w^2(X,S)(Y - f(S))^2] + \sigma^2 \nonumber
    \end{align}
    where $w^2(x,s) = \left(\frac{p(T=1 \mid s,x)}{p(T=1 \mid x)} - \frac{p(T=0 \mid s,x)}{p(T=0 \mid x)}\right)^2$, and the second inequality holds if the outcome has exogenous, centered, additive noise: $Y = h(X,S)+\epsilon$ where $\E[\epsilon]=0$, $\mbox{Var}(\epsilon) \leq \sigma^2$ and $\epsilon \indep S, X, T$. If $p^o(X)/p^e(X)$ is known, the same expression with the weight $\tilde{w}^2(x,s) = \frac{p^o(x)}{p^e(x)}w^2(x,s)$ yields a bound on $R^e_\tau(f)$.
\end{repprop}

\begin{proof}
    We leave out superscripts indicating the population, as the proof holds for either cohort as long as it is used consistently on both the LHS and RHS. 

    Under Assumptions~\ref{asmp:id}--\ref{asmp:med}, we have consistency $S = S(T)$ and exchangeability on $S$, $S(t) \indep T \mid X$, as well as $Y = Y(t)$, and $Y(t) \indep T \mid X$, and $Y \indep T \mid X, S$. Then, we can prove
    \begin{align}
    \E[Y(t) \mid X=x] &= \E[Y(t) \mid X=x, T=t] \\
    & = \E[Y \mid X=x, T=t] \\
    & = \E_{S(t)}[\E[Y \mid X=x, S(t), T=t] \mid X=x, T=t] \\
    & = \E_{S}[\E[Y \mid X=x, S, T=t] \mid X=x, T=t] \label{eq:surr_dens}\\
    & = \E_{S}[\E[Y \mid X=x, S] \mid X=x, T=t]~.
    \end{align}
    Where we have used in \eqref{eq:surr_dens} that 
    $$
    p(S(t)=s \mid X=x) = p(S=s \mid X=x, T=t)~.
    $$
    under the given assumptions.    
    Let $h(x, s)  \coloneqq \E[Y \mid X=x, S=s]$. Then,     
    \begin{align*}
    (\tau_Y(x) - \tau_{f(S)}(x))^2
    & = (\E[h(x, S(1))-f(S(1)) \mid X=x] 
     - \E[h(x, S(0))-f(S(0)) \mid X=x])^2 \\
    &  = \left(\E_{S}[(h(x, S) -f(S)) \mid X=x, T=1] 
     - \E_{S}[(h(x, S) -f(S)) \mid X=x, T=0] \right)^2 \\
    & = \left(\E[\frac{p(S\mid T=1, x)}{p(S\mid x)}(h(x, S) -f(S)) \mid X=x] 
     - \E[\frac{p(S\mid T=0, x)}{p(S\mid x)}(h(x, S) -f(S)) \mid X=x]\right)^2 \\
    & = \left(\E\left[\left(\frac{p(T=1 \mid S, x)}{p(T=1\mid x)} - \frac{p(T=0 \mid S, x)}{p(T=0\mid x)}\right) (h(x, S) -f(S)) \mid X=x \right] \right)^2 \\
    & \leq \E[w^2(x,s)(h(x, S) -f(S))^2 \mid X=x]
    \end{align*}
    where $w^2(x,s) = \left( \frac{p(T=1 \mid S, x)}{p(T=1\mid x)} - \frac{p(T=0 \mid S, x)}{p(T=0\mid x)} \right)^2$, and so 
    $$
    \E_X[(\tau_Y(x) - \tau_{f(S)}(X))^2] \leq \E_{X,S}[w^2(X,s)(h(X, S) -f(S))^2]~.
    $$
    To arrive at the result using $Y$ instead of $h(X, S) $, we exploit the assumption of additive, centered, exogenous noise $\epsilon$, and that $Y = h(X, S)  + \epsilon$ in this case. For any weighting function, 
    \begin{align*}
    \E_{X,S}[w(X,S)(h(X, S) -f(S))^2] &= \E_{X,S,Y}[w(X,S)(Y-f(S) + h(X, S)  - Y)^2]  \\
    & = \E_{X,S,Y}[w(X,S)[(Y-f(S))^2 + 2(h(X, S)  - Y) + (h(X, S) -Y)^2]] \\
    & \leq \E_{X,S,Y}[w(X,S)[(Y-f(S))^2]] + 0 + \sigma^2
    \end{align*}
    by the assumption of bounded noise from the statement. 
\end{proof}

We can prove a similar result by bounding the square earlier, 
\begin{align*}
\frac{1}{2}(\tau_Y(x) - \tau_{f(S)}(x))^2
& = \frac{1}{2}(\E[h(x, S(1))-f(S(1)) \mid X=x] 
 - \E[h(x, S(0))-f(S(0)) \mid X=x])^2 \\
& \leq \E[h(x, S(1))-f(S(1)) \mid X=x]^2 
 + \E[h(x, S(0))-f(S(0)) \mid X=x]^2 \\
& \leq \E_{S(1)}[(h(x, S(1))-f(S(1)))^2 \mid X=x] 
 +\E_{S(1)}[(h(x, S(0))-f(S(0)))^2 \mid X=x] \\
& \leq \E_{S}[(h(x, S) -f(S))^2 \mid X=x, T=1] 
 +\E_{S}[(h(x, S) -f(S))^2 \mid X=x, T=0] \\
& = \E\left[\frac{p(S\mid T=1, x)}{p(S\mid x)}(h(x, S) -f(S))^2 \mid X=x\right] \\ 
 &\quad +\E\left[\frac{p(S\mid T=0, x)}{p(S\mid x)}(h(x, S) -f(S))^2 \mid X=x\right] \\
& = \E\left[\left(\frac{p(T=1 \mid S, x)}{p(T=1\mid x)} + \frac{p(T=0 \mid S, x)}{p(T=0\mid x)}\right) (h(x, S) -f(S))^2 \mid X=x \right] \\
& = \E\left[w^+(x,s)(h(x, S) -f(S))^2 \mid X=x\right]
\end{align*}
where $w^+(x,s) = \frac{p(T=1 \mid S, x)}{p(T=1\mid x)} + \frac{p(T=0 \mid S, x)}{p(T=0\mid x)}$. 
It follows immediately that 
$$
\E_X[(\tau_Y(x) - \tau_{f(S)}(x))^2] \leq 2\E_{X,S}[w^+(x,s)(h(x, S) -f(S))^2]~.
$$
However, the weights $w^+$ are all uniform if $e(x) = p(T=1 \mid x) = 0.5$, that is, if there is no confounding.

Using a very similar strategy, we can also prove a result for the L1-loss. 
\begin{align*}
    |\tau_Y(x) - \tau_{f(S)}(x)| &= \left| \int_s \left(\frac{p(T=1 \mid S, x)}{p(T=1\mid x)} - \frac{p(T=0 \mid S, x)}{p(T=0\mid x)}\right)p(s\mid x)(h(x, s)  - f(s))ds \right|\\
    & = |\E_S[\eta(x,S)(h(x, S)  - f(S)) \mid X=x]| \\
    & \leq \E_S[w^1(x,S)|h(x, S)  - f(S)| \mid X=x]
\end{align*}
with $\eta(x,s) = \frac{p(T=1 \mid S, x)}{p(T=1\mid x)} - \frac{p(T=0 \mid S, x)}{p(T=0\mid x)}$, and $w^1(x,s) = |\eta(x,s)|$, 
which means
$$
\E_X[|\tau_Y(X) - \tau_{f(S)}(X)|] \leq \E_{X,S}[w^1(x,s)|h(x, S)  - f(S)|]~.
$$
\newpage
\section{Examples and counterexamples}
\label{app:examples}
\begin{figure}[htbp]
\centering

\begin{subfigure}{0.3\textwidth}
\centering
\begin{tikzpicture}[
    ->,
    >=stealth,
    node distance=2cm,
    every node/.style={circle, draw, minimum size=0.9cm}
]
\node (T) {$T$};
\node (S) [right of=T] {$S$};
\node (X) [above of=S] {$X$};
\node (Y) [right of=S] {$Y$};
\draw (X) -- (T);
\draw (T) -- (S);
\draw (X) -- (S);
\draw (S) -- (Y);
\end{tikzpicture}
\caption{Confounded surrogate mediator}
\end{subfigure}
\hfill
\begin{subfigure}{0.3\textwidth}
\centering
\begin{tikzpicture}[
    ->,
    >=stealth,
    node distance=2cm,
    every node/.style={circle, draw, minimum size=0.9cm}
]
\node (T) {$T$};
\node (S) [right of=T] {$S$};
\node (X) [above of=S] {$X$};
\node (Y) [right of=S] {$Y$};
\draw (X) -- (T);
\draw (T) -- (S);
\draw (X) -- (Y);
\draw (S) -- (Y);
\end{tikzpicture}
\caption{Confounded outcome}
\end{subfigure}
\hfill
\begin{subfigure}{0.3\textwidth}
\centering
\begin{tikzpicture}[
    ->,
    >=stealth,
    node distance=2cm,
    every node/.style={circle, draw, minimum size=0.9cm}
]
\node (T) {$T$};
\node (S) [right of=T] {$S$};
\node (X) [above of=S] {$X$};
\node (Y) [right of=S] {$Y$};
\draw (X) -- (T);
\draw (T) -- (S);
\draw (X) -- (S);
\draw (X) -- (Y);
\end{tikzpicture}
\caption{Confounded surrogate, no effect}
\end{subfigure}

\medskip

\begin{subfigure}{0.3\textwidth}
\centering
\begin{tikzpicture}[
    ->,
    >=stealth,
    node distance=2cm,
    every node/.style={circle, draw, minimum size=0.9cm}
]
\node (T) {$T$};
\node (S) [right of=T] {$S$};
\node (X) [above of=S] {$X$};
\node (Y) [right of=S] {$Y$};
\draw (X) -- (T);
\draw (T) -- (S);
\draw (X) -- (S);
\draw (X) -- (Y);
\draw (S) -- (Y);
\end{tikzpicture}
\caption{Confounded surrogate and outcome}
\end{subfigure}
\hfill
\begin{subfigure}{0.3\textwidth}
\centering
\begin{tikzpicture}[
    ->,
    >=stealth,
    node distance=2cm,
    every node/.style={circle, draw, minimum size=0.9cm}
]
\node (T) {$T$};
\node (S) [right of=T] {$S$};
\node (X) [above of=S] {$X$};
\node (Y) [right of=S] {$Y$};
\draw (X) -- (T);
\draw (T) -- (S);
\draw (X) -- (S);
\draw (X) -- (Y);
\draw (S) -- (Y);
\draw [red, bend left] (T) to (Y); 
\end{tikzpicture}
\caption{Direct, unmediated effect}
\end{subfigure}
\hfill
\begin{subfigure}{0.3\textwidth}
\centering
\begin{tikzpicture}[
    ->,
    >=stealth,
    node distance=2cm,
    every node/.style={circle, draw, minimum size=0.9cm}
]
\node (T) {$T$};
\node (M) [right of=T, dashed, draw=red] {$M$};
\node (S) [right of=M] {$S$};
\node (X) [above of=M] {$X$};
\node (Y) [right of=X] {$Y$};
\draw (X) -- (T);
\draw (T) -- (M);
\draw (X) -- (M);
\draw (X) -- (Y);
\draw (X) -- (S);
\draw (M) -- (Y);
\draw (M) -- (S);
\end{tikzpicture}
\caption{Unobserved mediator}
\end{subfigure}

\caption{Six scenarios for surrogate learning.}
\label{fig:six-dags}
\end{figure}

Figure~\ref{fig:six-dags} illustrates the causal graphs of six surrogate learning scenarios. We go through the properties of cases a-e) below. In all of the cases illustrated, for any function $f(S)$
\begin{align*}
\mu_{f(S)}(x) & = \E[f(S(t)) \mid X=x] = \E[f(S) \mid X=x, T=t] \\
\tau_{f(S)}(x) & = \E[f(S) \mid X=x, T=1] - \E[f(S) \mid X=x, T=0] \\
\end{align*}
We also note that a convenient form to express $R^e_\tau(f)$ is
$$
R_\tau^e(f) = \E^o_X\bigg[ \frac{p_e(X)}{p_o(X)} \big(\int_s \pi(x,s)(h(X, S) -f(S))ds\big)^2 \bigg]~, 
$$
where 
$$
\pi(x,s) \coloneqq p(s \mid x, T=1) - p(s \mid x, T=0)~.
$$

\subsection{a) Counfounded surrogate mediator}
In this case, 
$$
\mu_Y(x) = \E[Y(t) \mid X=x] = \E_{S(t)}[\E[Y \mid S(t)] \mid X=x] = \E_S[\E[Y \mid S] \mid X=x, T=t]
$$
and with $g(s) = \E[Y \mid S=s]$, the CATE on $Y$ is equal to 
$$
\tau_Y(x) = \E_S[g(S) \mid T=1, x] - \E_S[g(S) \mid T=0, x]
$$
which is non-zero in general. Moreover, there exists an $f(S)$ such that $\tau_Y(x) = \tau_{f(S)}(x)$ for all $x$, and that $f=g + c$ for any constant $c$. Thus, $f=g+c$ are also the minimizers of \eqref{eq:main_obj}. $f=g$ is also the minimizer of any weighted regression objective $\E[w(X,S)(h(X, S)  - f(S))^2]$ when $h(X, S)  = g(S)$.

\subsection{b) Counfounded outcome}\label{app:examples-b}

In this case, 
$$
\mu_Y(x) = \E[Y(t) \mid X=x] = \E_{S(t)}[\E[Y \mid S(t), X=x, T=t] \mid X=x] = \E_S[\E[Y \mid S, X=x] \mid X=x, T=t]
$$
and with $h(x, s)  = \E[Y \mid S=s, X=x]$,
$$
\tau_Y(x) = \E_S[h(x, S)  \mid T=1, x] - \E_S[h(x, S)  \mid T=0, x]
$$
which is non-zero in general. In this case, there {\color{red}\emph{may not exist}} a function $f(S)$ such that $\tau_Y(x) = \tau_{f(S)}(x)$ for all $x$, since $X$ carries information about $Y$ not mediated through $S$. In fact, $\E[f(S) \mid X=x, T=t] = \E[f(S) \mid T=t]$ is constant with respect to $X$ since $S \indep X \mid T$. Under mild assumptions, a minimizer of \eqref{eq:main_obj} yields the ATE,
$$
\forall f \in f^* : \tau_{f(S)}(x) = \E_X[\E[h(X, S)  \mid T=1, X] - \E[h(X, S)  \mid T=0, X]] = \tau_Y~.
$$

We can find the minimizer $f^*$ by setting the derivative of any weighted regression to zero, conditioned on $s$,
$$
\E_X[2w(X,s)(h(X, s) -f(s)) \mid S=s] = 0 \Longrightarrow f^*(s) = \frac{\E_X[w(X,s)h(X, s)  \mid S=s]}{\E_X[w(X,s) \mid S=s]}.
$$
This expression is not constant with respect to $s$ in general. Then
\begin{align*}
\tau_{f^*(S)}(x) & = \E[f^*(S) \mid T=1,x] - \E[f^*(S) \mid T=0,x] \\
& = \E_S\left[\frac{\E_{X'}[w(X',S)h(X,S') \mid S]}{\E_{X'}[w(X',S) \mid S]} \mid T=1, \not x \right] - \E_S\left[\frac{\E_{X'}[w(X',S)h(X,S') \mid S]}{\E_{X'}[w(X',S) \mid S]} \mid T=0, \not x\right] 
\end{align*}
The conditioning on $X$ in the outer expectations can be removed since $S \indep X \mid T$ in that setting. Therefore, the learned $\tau_{f(S)}(x)$ is indeed constant w.r.t. $x$ for setting $b)$, but not necessarily equal to the true ATE. 

\subsubsection*{Weights that yield the true ATE}
Are there weights $w$ that result in an unbiased estimate of the true ATE?

\begin{align*}
\tau_{f^*(S)} & = \E[\tau_{f^*(S)}(X)] \\
&= \int_{x' }p(x')\int_s \frac{\int_x w(x,s)h(x, s)  p(x\mid s)dx}{\int_x w(x,s)p(x\mid s)dx} \left( p(s\mid T=1) - p(s\mid T=0)\right) ds dx' \\
&= \int_s \frac{\int_x w(x,s)h(x, s)  p(x\mid s)dx}{\int_x w(x,s)p(x\mid s)dx} \left( p(s\mid T=1) - p(s\mid T=0)\right) ds \\
& = \{ w(x,s) = p(x)/p(x\mid s) \}  \\
&= \int_s \frac{\int_x \frac{p(x)}{p(x\mid s)}h(x, s)  p(x\mid s)dx}{\int_x \frac{p(x)}{p(x\mid s)}p(x\mid s)dx} \left( p(s\mid T=1) - p(s\mid T=0)\right) ds \\
&= \int_s \int_x p(x)h(x, s)  dx \left( p(s\mid T=1) - p(s\mid T=0)\right) ds \\
&= \E_X[\E_S[h(X,S) \mid T=1]]-\E_X[\E_S[h(X,S) \mid T=0] \\
&= \tau_Y.
\end{align*}

Thus, for the choice of $w(x,s) = \frac{p(x)}{p(x|s)}$ the weighted regression will be an unbiased estimator of the ATE. But this is not the only choice.

Consider the weights $w^2(x,s) = (\frac{p(T=1 \mid s,x)}{p(T=1 \mid x)} - \frac{p(T=0 \mid s,x)}{p(T=0 \mid x)})^2$ and $w^+(x,s) = \frac{p(T=1 \mid s,x)}{p(T=1 \mid x)} + \frac{p(T=0 \mid s,x)}{p(T=0 \mid x)}$ for the CATE upper bound, described after \Cref{prop:risk_bound}. We ask what happens to the ATE estimate for the choice of $w^+(x,s)$. We begin noting that:

$$
w^2(x,s) \underset{(a)}{=} \left(\frac{p(s|T=1,x)}{p(s|x)} - \frac{p(s|T=0,x)}{p(s|x)} \right)^2 \underset{(b)}{=} \frac{(p(s|T=0) - p(s|T=1))^2}{p(s|x)^2},
$$
$$
w^+(x,s) \underset{(a)}{=} \frac{p(s|T=1,x)}{p(s|x)} + \frac{p(s|T=0,x)}{p(s|x)} \underset{(b)}{=} \frac{p(s|T=0) + p(s|T=1)}{p(s|x)},
$$
where (a) and (b) follows from Bayes' rule and $X \indep S \mid T$ respectively.
Looking at the ATE for $f$ with $w^+$: 
\begin{align*}
\tau_{f(S)} &= \int_s \frac{\int_x w^+(x,s)h(x, s)  p(x\mid s)dx}{\int_x w^+(x,s)p(x\mid s)dx} \left( p(s\mid T=1) - p(s\mid T=0)\right) ds \\
&\underset{(c)}{=} \int_s \frac{\int_x \frac{p(s|T=0) + p(s|T=1)}{p(s|x)}  h(x, s)  p(x\mid s)dx}{\int_x \frac{p(s|T=0) + p(s|T=1)}{p(s|x)} p(x\mid s)dx} \left( p(s\mid T=1) - p(s\mid T=0)\right) ds \\
&\underset{(d)}{=} \int_s \frac{\int_x \frac{p(x)}{p(s)}  h(x, s)  dx}{\int_x \frac{p(x)}{p(s)} dx} \left( p(s\mid T=1) - p(s\mid T=0)\right) ds \\
&= \int_s \int_x p(x)h(x, s)  dx \left( p(s\mid T=1) - p(s\mid T=0)\right) ds \\
&= \E_X[\E_S[h(X,S) \mid T=1]]-\E_X[\E_S[h(X,S) \mid T=0] \\
&= \tau_Y,
\end{align*}
where the terms that depend only on $s$ cancel in $(c)$ and $(d)$. Thus, in this case, the sum weights, $w^+$ are an unbiased estimator of the ATE. The same is true for any weighting function that can be written as $w(x,s) = \phi(s,t)/p(s \mid x)$, such as the absolute difference weights $w^1(x,s) = |\frac{p(T=1 \mid s,x)}{p(T=1 \mid x)} - \frac{p(T=0 \mid s,x)}{p(T=0 \mid x)}|$

Following a similar derivation for $w^2$, we get 
\begin{align*}
\tau_{f(S)} &= \int_s \frac{\int_x \frac{p(x)}{p(s\mid x)}h(x, s) dx}{\int_x \frac{p(x)}{p(s\mid x)}dx} \left( p(s\mid T=1) - p(s\mid T=0)\right) ds 
\end{align*}
which is not equal to $\tau_Y$ in general. However, in a randomized experiment version of case b), $S \indep X$ and $p(s \mid x) = p(s)$. In this case, $\tau_{f(S)} = \tau_Y$. We leave the task of characterizing the precise bias in the general case for future work. 

\subsection{c) Counfounded surrogate, no effect}
In this case, 
$$
\mu_Y(x) = \E[Y(t) \mid X=x] = \E_{S(t)}[\E[Y \mid S(t), X=x] \mid X=x] = \E_S[\E[Y \mid X=x] \mid X=x, T=t] = \E[Y \mid X=x] 
$$
since $Y \indep T \mid X$ and $Y \indep S \mid X$. Since the potential outcomes are independent of $T$, with $h(x) = \E[Y \mid X=x]$
$$
\tau_Y(x) = h(x) - h(x) = 0.
$$
Trivially, there exists an $f(S)$ such that $\tau_Y(x) = \tau_{f(S)}(x)$ for all $x$, and that is any constant $f(S)=c$. Thus, $f=g$ is a minimizer of \eqref{eq:main_obj}.

Since $Y \indep S \mid X$, $h(x, s)  = h(x)$. We can find the minimizer $f^*$ of \eqref{eq:main_obj} by setting the derivative of any weighted regression to zero, conditioned on $s$,
$$
\E[2w(X,s)(h(X)-f(s)) \mid S=s] = 0 \Longrightarrow f^*(s) = \frac{\E_X[w(X,s)h(X) \mid S=s]}{\E_X[w(X,s) \mid S=s]}.
$$
This expression is not constant with respect to $s$ in general. Then
\begin{align*}
\tau_{f^*(S)}(x) & = \E[f^*(S) \mid T=1,x] - \E[f^*(S) \mid T=0,x] \\
& = \E_S\left[\frac{\E_{X'}[w(X',s)h(X') \mid S]}{\E_{X'}[w(X',s) \mid S]} \mid T=1,X=x \right] - \E_S\left[\frac{\E_{X'}[w(X',s)h(X') \mid S]}{\E_{X'}[w(X',s) \mid S]} \mid T=0,X=x\right] \\
& \neq 0
\end{align*}
which is biased in general. 

\subsection{d) Confounded surrogate and outcome}
\label{app:examples_d}
In this case, the potential outcomes for the outcome $Y(1), Y(0)$ and surrogate $S(1), S(0)$ are independent given $X$,
$$
\mu_Y(x) = \E[Y(t)| X=x] = \E_{S(t)}[\E[Y|S(t), X=x]|X=x] \underset{(a)}{=} \E_S[\E[Y|S, x] | X=x, T=t]
$$
where in (a) we use  $S(1),S(0) \indep T | X$  and $Y \indep T|X,S$. The treatment effect for $h(x,s) = \E[Y|X=x, S=s]$ is
$$
\tau_Y(x) = \E_S[h(x, S)  \mid T=1, X=x] - \E_S[h(x, S)  \mid T=0, X=x]
$$
which is non-zero in general, but constant when $h$ is linear with respect to $x$, in that case $\tau_Y(x)$ is a constant. Since $X$ carries more information on $Y$ directly, than through $S$ it is not possible to learn an $f$ that minimizes \eqref{eq:main_obj}. We can find the minimizer $f^*$ by setting the derivative of any weighted regression to zero, conditioned on $s$,

$$
\E_X[2w(X,s)(h(X, s) -f(s)) \mid S=s] = 0 \Longrightarrow f^*(s) = \frac{\E_X[w(X,s)h(X, s)  \mid S=s]}{\E_X[w(X,s) \mid S=s]}.
$$
This expression is not constant with respect to $s$ in general. Then
\begin{align*}
\tau_{f^*(S)}(x) &= \E[f^*(S) \mid T=1,x] - \E[f^*(S) \mid T=0,x] \\
&= \E_S\left[\frac{\E_{X'}[w(X',S)h(X,S') \mid S]}{\E_{X'}[w(X',S) \mid S]} \mid T=1, X=x \right] - \E_S\left[\frac{\E_{X'}[w(X',S)h(X,S') \mid S]}{\E_{X'}[w(X',S) \mid S]} \mid T=0, X=x\right] \\ &\neq 0. 
\end{align*}

\paragraph{Unbiased weights} Before we check if a choice of $w$ results in true ATE,  we note that the following identity follow from Bayes' rule
\begin{equation}
\label{eq:bayes}
w^-(x,s) := \frac{p(T=1|x, s)}{p(T=1|x)} - \frac{p(T=0|x, s)}{p(T=0|x)}  = \frac{\pi(x,s)}{p(s|x)},
\end{equation}

where $\pi(x,s) = p(s\mid x, T=1) - p(s\mid x, T=0)$, is the difference of densities. Now, we check for the weight $w$ that result in the ATE:

\begin{align*}
\tau_{Y} - \tau_{f^*(S)} &= \E_X[\E_S[\tau_{f^*(S)}|X=x]] - \E_X[\tau_Y(x)] \\ 
&=  \E_X \left[\E_S[\tau_{f^*(S)}|X=x] - \left(\E_S[h(x, S)  \mid T=1, X=x] - \E_S[h(x, S)  \mid T=0, X=x]\right) \right] \\
&= \int_x p(x) \int_s \left(f(s) - h(x, s)  \right) \pi(x,s) ds dx \\
&= \int_s \int_x \left(f(s) - h(x, s)  \right) \underbrace{p(x)\pi(x,s)}_{Q(x,s)} dx ds \\
&= \int_s f(s) \int_x Q(x,s) dx ds - \int_s \int_x h(x,s) Q(x,s) dx ds \\
&= \int_s \frac{\int_{x'} w(x',s)h(x',s) p(x'\mid s)dx'}{\int_{x'} w(x',s)p(x'\mid s)dx'} \int_x Q(x,s) dx ds - \int_s \int_x h(x,s) Q(x,s) dx ds \\
& \mbox{Take } w = \frac{p(T=1|x, s)}{p(T=1|x)} - \frac{p(T=0|x, s)}{p(T=0|x)} \mbox{ and using \eqref{eq:bayes}:} \\
&=  \int_s \frac{\int_{x'} \frac{Q(x',s)}{p(s)p(x'|s)}h(x',s) p(x'\mid s)dx'}{\int_{x'} \frac{Q(x',s)}{p(s)p(x'|s)} p(x'\mid s)dx'} \int_x Q(x,s) dx ds - \int_s \int_x h(x,s) Q(x,s) dx ds \\
&=  \int_s \frac{\int_{x'} Q(x',s)h(x',s) dx'}{\int_{x'} Q(x',s) dx'} \int_x Q(x,s) dx ds - \int_s \int_x h(x,s) Q(x,s) dx ds  \\
&= \int_s \int_{x'} Q(x',s)h(x',s) dx' - \int_s \int_{x} Q(x,s)h(x, s)  dx \\
&= 0.
\end{align*}
However, when $\frac{p(T=1\mid x,s)}{p(t=1 \mid x)} < 1.$ the weights $w^-({x,s})$ can be negative, leading to arbitrarily poor weighted regressions, and biased CATE in general.

\subsection{e) Direct, unmediated effect}
In the case of a direct effect of $T$ on $Y$, not mediated by $S$, 
$$
\mu_Y(x) = \E[Y(t) \mid X=x] = \E_{S(t)}[\E[Y \mid S(t), X=x, T=t] \mid X=x] = \E_S[\E[Y \mid S, X=x, T=t] \mid X=x, T=t]
$$
and with a $t$ dependent $h_t(x,s) = \E[Y \mid X=x, S=s,  T=t]$,
$$
\tau_Y(x) = \E_S[h_1(x,S) \mid T=1, x] - \E_S[h_0(x,S) \mid T=0, x]
$$

There may not exist $f(S)$ such that $\tau_Y(x) = \tau_{f(S)}(x)$ for all $x$ since $X$ carries more information about $Y$. Since $h$ depends on $t$, we can find the minimizer of $f_t^*$ by setting the derivative of any weighted regression to zero, conditioned on $s$ and $t$, which yields 

$$
f_t^*(s) = \frac{\E_X[w(X,s,t)h_t(X) \mid S=s, T=t]}{\E_X[w(X,s,t) \mid S=s, T=t]}
$$
and gives the CATE estimate

\begin{align*}
\tau_{f^*(S)}(x) &= \E_S\left[\frac{\E_{X'}[w(X',S,t)h_1(S,X') \mid T=1,S]}{\E_{X'}[w(X',S,t) \mid T=1, S]} \mid T=1,  x \right] \\
&- \E_S\left[\frac{\E_{X'}[w(X',S,t)h_0(S,X') \mid T=0, S]}{\E_{X'}[w(X',S,t) \mid T=0, S]} \mid T=0, x\right] 
\end{align*}

We can check what $w$ results in true ATE,

\begin{align*}
\tau_{f^*(S)} & = \E_X[\E_S[\tau_{f^*(S)}|X=x]] = \int_x p(x) \int_s f(s) \pi(x,s) ds dx  \\
&= \int_s f(s) \int_x p(x) \pi(x,s) dx ds  \\
&= \int_s \frac{\int_{x'} w(x', s, t)h_t(s,x') p(x'\mid s)dx'}{\int_{x'} w(x',s, t)p(x'\mid s)dx'} \int_x p(x) \pi(x,s) dx ds  \\
& \mbox{Take } w(x,s,t) =  \frac{\pi(x,s)}{p(s|x)}, \mbox{ and simplify} \\
&\implies \int_s \frac{\int_{x'} h_t(s,x') p(x') \pi(x',s) dx'}{\int_{x'} p(x') \pi(x', s) dx'} \int_x p(x) \pi(x,s) dx ds  \\
&= \int_{x'} \int_s h_t(s, x') p(x') \pi(x', s) dx' \\
&= \E_X[\E_S[h_1(x,S)|X=x, T=1]] - \E_X[\E_S[h_0(x,S)|X=x, T=0]] = \tau_Y .
\end{align*}
However, if we are able to accurately estimate $h_1(x,s)$ and $h_0(x,s)$ ahead of the trial, there is little need for the trial in the first place, since their \emph{averages} are precisely the goal of most experiments. 

\newpage
\section{Experimental details}
\label{app:experiment_details}
\paragraph{Simulation Studies}
We generate data starting using a binary treatment $T$, a vector-valued covariate $X$, and outcome $Y$.  We generate three types of surrogate variables, $S_{\med}$, $S_{\leaf}$, $S_{\proxy}$, to study how different baseline models learn about surrogates. $S_{\med}$ variables are mediators of the effect between $T$ and $Y$, $S_{\leaf}$ variables are influenced by $T$ but do not have an effect on $Y$, while the $S_{\proxy}$ variables influence $Y$ but are  not influenced by the treatment $T$. The structural equations are given as

\begin{align} \label{eq:scm_experiment}
\begin{split}
X &\sim \mc{N}(\mu_x,\Sigma_X) \\
T &= \operatorname{Bernoulli}\left(\sigma(W^\top_{X\to T} X +b_t + \e_T)\right) \\
S_{\med} &= b_{s_0}+ W^\top_{x\to S_0} X +  \left(W^\top_{X\to S_0}X + b_{s_0} \right) T+ \e_{S_0} \\
S_{\leaf} &= b_{s_1}+ W^\top_{x\to S_1} X +  \left(W^\top_{X\to S_1}X + b_{s_1} \right) T+ \e_{S_1} \\
S_{\proxy} &= b_{s_2}+ W^\top_{x\to S_2} X +  \e_{S_2} \\
Y &= b_y + W_{X \to Y}^\top X + \phi(S_{\med}) + \phi(S_{\proxy}) +  \e_Y,
\end{split}
\end{align}

where $\sigma$ denotes the sigmoid function, and $\phi$ is a linear or a nonlinear map, depending on the scenario. We set $X$ to have  $2$ dimensions with standard normal distribution. The treatment parameters are $W_{X \to T} =[0.8, -0.6]$ with intercept term $b_t =-0.1$. Surrogate variables have a dimensionality of $3$, $2$, $2$ for $S_{\med}$, $S_{\leaf}$, and $S_{\proxy}$ respectively. We choose standard normal distributions for the noise variables $\e_T$ $\e_{S_0}$, $\e_{S_1}$, $\e_{S_2}$,  $\e_{Y}$ and the columns of the linear maps $W_{X \to S_0}$,  $W_{X \to S_1}$,  $W_{X \to S_2}$ are sampled from a uniform distribution on a hyper sphere of radius  $0.7$, $0.5$, and $0.6$ respectively. The intercept terms $b_{s_0}$, $b_{s_1}$, $b_{s_2}$, $b_y$ are sampled from $\mc{N}(0.6, 0.25)$. Each experiment contains $N=10 000$ samples. We validate on a simulated trial data where we remove the confounding effect of $X$ on $T$ and sample $T\sim \operatorname{Bernoulli}(0.5)$.
 
We modify the structural equations under six main scenarios given in~\Cref{fig:six-dags}---for instance by setting $W_{X\to Y} = 0$ for scenario (a). Furthermore each scenario has ten sub-scenarios, allowing us to study the baseline models under unobserved confounders and differing scales of relationships between the surrogates. In some scenarios, we add an unobserved confounder variable $U\sim \mc{N}(0, 1)$ influencing $T$, $S$, and $Y$, in other scenarios we change the relative scales of $S_{\med}$ and $S_{\proxy}$ and add non-linear square terms, $\phi(z) = z^2$, for their effects on the outcome. In total, this yields 60 experimental configurations.

\paragraph{Models}
We implement \algbound{} by fitting a weighted regression of $\hat h$ on the surrogate vector $S$, where observations are weighted by the stabilized $w^2$, with propensities clipped between $(0.3, 0.7)$. In the synthetic experiments, we use a weighted Lasso implementation to encourage sparse, stable surrogate models. For the IHDP application, we instead use a weighted ordinary least squares (OLS) specification with sample weights, motivated by the more limited sample size and the resulting instability of regularization-based model selection in that setting. Propensity scores $e(x)$ and surrogate scores $\rho(x,s)$ are estimated using $L_2$-regularized logistic regression. For \algsamp{}, we first fit nuisance random forest models for $\hat{\E}[S|x,T=t]$ and sample $L=50$ residuals of the fitted model via bootstrapping to simulate the conditional $\hat{p}(S|x, T=t)$. We fit a PyTorch~\cite{pytorch} linear regression model on the potential outcomes $\hat{s}(0), \hat{s}(1)$ sampled from the conditional distribution (see \Cref{alg:methods}). In our implementation, we use an unregularized linear specification (no $L_1$ penalty). For both methods, we estimate the surrogate index $\hat{h}(x,s)$ by fitting a gradient boost model. All models are implemented using scikit-learn implementation~\cite{scikitlearn}. Random forest is fitted using default parameters, the parameters for gradient boost model is given in~\Cref{table:hgb-params}. 

In addition for \algbound{} we implemented optimal tree-based model variant where we replace the linear lasso regression with  
an optimal tree model \cite{aglin2021pydl8}. 
Since DL8.5 operates on binary input, we first discretize each surrogate $S_j$ into a set of cumulative threshold indicators based on empirical quantiles.
\[
Z_{j\ell} = \mathbb{I}\{S_j \le q_{j\ell}\}, \qquad q_{j\ell}\in\mathcal{Q},
\]
and learn the tree on the resulting binary feature vector $Z$. Given predictions $\hh_i = \hh(X_i,S_i)$ and nonnegative weights $w_i$ (defined by the chosen bound-weighting scheme), DL8.5 minimizes the squared error within the leaf. For a leaf $L$, the optimal leaf prediction is the weighted mean
\[
\widehat\mu(L) \;=\; \frac{\sum_{i\in L} w_i\,\hh_i}{\sum_{i\in L} w_i},
\]
and the corresponding leaf loss is
\[
\mathrm{Err}(L) \;=\; \sum_{i\in L} w_i\bigl(\hh_i-\widehat\mu(L)\bigr)^2.
\]
The tree objective is the sum of $\mathrm{Err}(L)$ over leaves, subject to the depth, minimum-support, and run time constraints of DL8.5.

In IHDP, for \algbound{} methods, we use the fixed propensity specification, setting $e(X)=\hat{p}(T=1)$, the empirical treatment probability in the training split. Since IHDP is a randomized experiment, treatment is independent of $X$ by design, ensuring the constant propensity is correctly specified in this setting.
\paragraph{Baseline models} We include outcome regressions, estimating $\E[Y\mid S=s]$, as well a single-surrogate regression (Reg-Sel-Reg), learned by fitting a regression, selecting the variable with the largest absolute coefficient, and regressing only on that. We also include the surrogate index $h(x,s) =\hat{\E}[Y|x, s]$ as a comparison that depends directly on pre-treatment variables, $x$. All baselines come in linear and tree model variants, with an additional gradient boosting regression with parameters in~\Cref{table:hgb-params} for the surrogate index. 
\paragraph{Hyperparameter search for tree models} For the tree models, we use a $5-$fold cross validation and use grid search over hyperparameters consisting of tree depth, necessary numbers for splitting, and number of minimum samples per leaf, and the complexity regularization parameter. See \Cref{tab:tree_grid} for the grid search parameters.
\begin{figure}[t]
\centering
\begin{subfigure}[t]{0.48\linewidth}
    \centering
    \includegraphics[width=\linewidth]{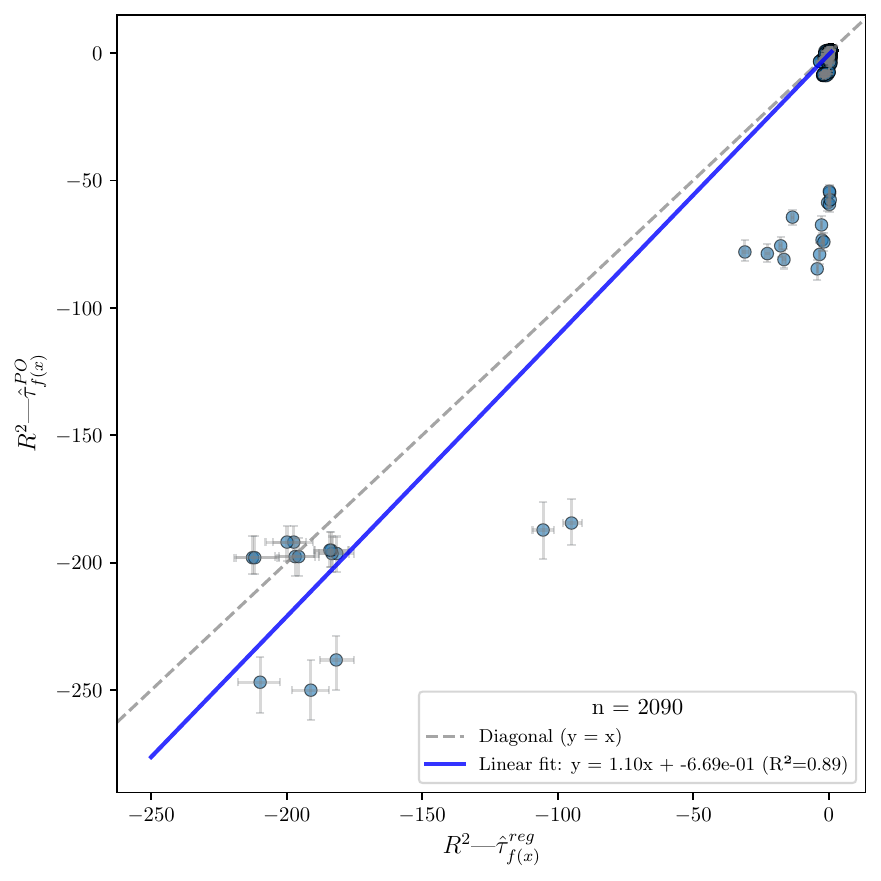}
    \caption{Composite scenarios in \Cref{tab:synth_a_e}}
    \label{fig:composite_scenarios_R2}
\end{subfigure}
\hfill
\begin{subfigure}[t]{0.48\linewidth}
    \centering
    \includegraphics[width=\linewidth]{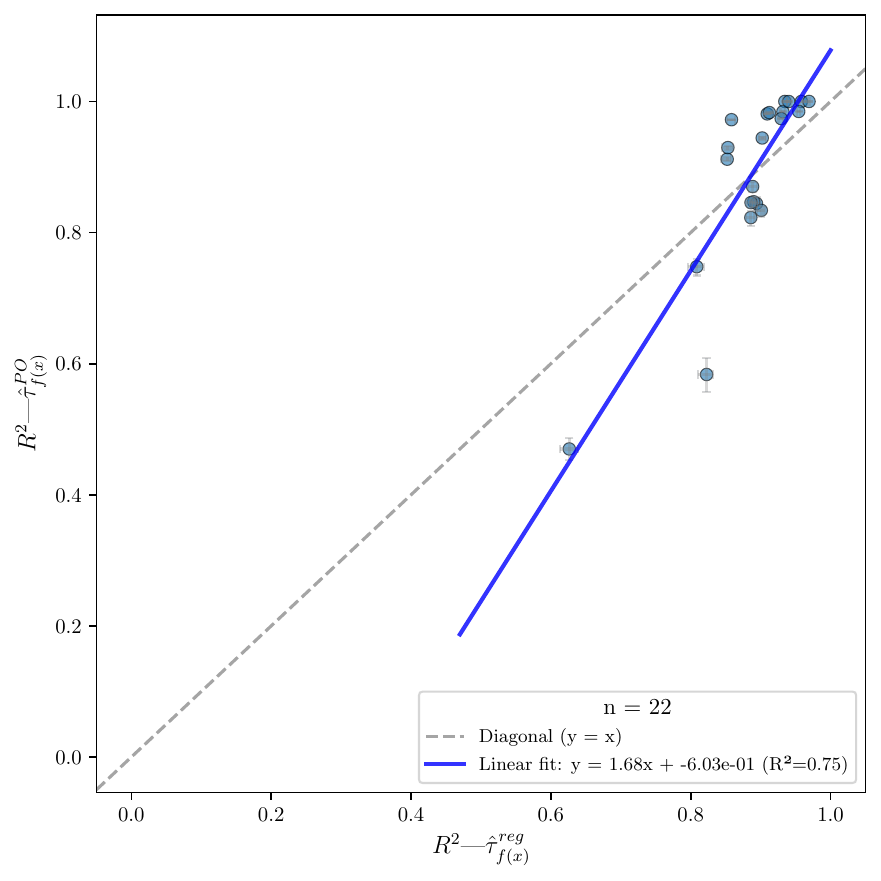}
    \caption{Linear scenarios in \Cref{apptab:synth_a_e}}
    \label{fig:linear_scenarios_R2}
\end{subfigure}
\caption{Comparison of $R^2$ computed via potential outcomes and regression across reported results. We omit results from scenarios b) and c) as the CATE is constant and $R^2$ is not meaningful.\label{appfig:r2_comp}}
\end{figure}
\begin{table}[t]
\centering
\caption{Grid search parameter space for the tree-based regression models. The grid controls tree depth, minimum leaf size, minimum split size, and cost-complexity pruning strength.}
\label{tab:tree_grid}
\begin{tabular}{ll}
\toprule
\textbf{Parameter} & \textbf{Values} \\
\midrule
\texttt{reg\_\_max\_depth} & $[3,\;4,\; \mbox{None}]$ \\
\texttt{reg\_\_min\_samples\_leaf} & $[50,\;100,\;200]$ \\
\texttt{reg\_\_min\_samples\_split} & $[100,\;200,\;400]$ \\
\texttt{reg\_\_ccp\_alpha} & $[0.0,\;10^{-3}]$ \\
\bottomrule
\end{tabular}
\end{table} 
\begin{table}[t]
\centering
\caption{Hyperparameters for the Surrogate Index Gradient Boost Model}
\begin{tabular}{ll}
\hline
\textbf{Hyperparameter} & \textbf{Default Value} \\
\hline

max\_depth             & 6 \\
min\_samples\_leaf     & 50 \\
learning\_rate         & 0.05 \\
max\_iter              & 400 \\
l2\_regularization     & 0.0 \\
early\_stopping        & True \\
validation\_fraction   & 0.1 \\
n\_iter\_no\_change    & 20 \\
\hline
\end{tabular}
\label{table:hgb-params}
\end{table}

\paragraph{Metrics} We report mean absolute error $(\operatorname{MAE}~\hat{\tau})$, for the treatment effect $\tau$ ATE, and precision in average treatment effect ($\operatorname{PEHE}= \frac1N \sum_{i=1}^N\ \left(\tau(x_i) -\hat{\tau}(x_i) \right)^2$) and coefficient of determination $R^2$ for the CATE, $\tau(x)$. We estimate $\hat{\tau}$ using the estimated average potential outcomes in the trial $\hat{\tau} = \hat{\E}^e[f(S) \mid T=1 ]-\hat{\E}^e[f(S) \mid T=0]$. However, such averaging techniques are not feasible for estimating CATE as both potential outcomes $s(1)$, $s(0)$ are not available. We estimate CATE using two methods, first by fitting linear models $g_t(x)$ for $\E_S[f(S) |x, T=t]$, and by using the ground-truth potential outcomes $s(1)$ and $s(0)$ and then estimate the CATE using
\begin{align*}
    \hat{\tau}_{f(S)}^{reg} (x) &= g_1(x) - g_0(x), \\
    \hat{\tau}_{f(S)}^{PO} (x) &= f(s(1)) - f(s(0)), \mbox{ where } s(t)\sim p(s|x,T=t),
\end{align*}
for each method, respectively. In \Cref{tab:synth_a_e} and \Cref{apptab:synth_a_e}, we report the regression based metric as the $R^2$. We give a comparison of the cases in \Cref{appfig:r2_comp}. 
\subsection{IHDP variables}
\label{app:ihdp_variables}
For lists of the variables used in IHDP, see Tables~\ref{tab:ihdp_design}--\ref{tab:ihdp_outcome}.
\begin{table}[t]
\centering
\caption{Design and treatment variables in IHDP.}
\label{tab:ihdp_design}
\small
\begin{tabular}{lll}
\toprule
Variable & Role & Description \\
\midrule
\texttt{IHDP\_Number} & ID & Infant identifier. \\
\texttt{Site\_name} & Site & Multi-site trial identifier. \\
\texttt{Treatment\_group} & $T$ & Randomized assignment (mapped to 0/1). \\
\texttt{Birth\_weight\_group} & Stratum & Birth-weight group (used for subgroup analysis). \\
\bottomrule
\end{tabular}
\end{table}
\begin{table}[h]
\centering
\caption{Post-treatment surrogate variables $S$ (SURROGATE\_SET=all).}
\label{tab:ihdp_surrogates}
\small
\begin{tabular}{p{0.42\linewidth}p{0.50\linewidth}}
\toprule
Time & Variables \\
\midrule
40 weeks & \texttt{Infant\_weight\_40w}, \texttt{Infant\_length\_40w}, \texttt{BMI\_40w} \\
4 months & \texttt{Infant\_weight\_4m}, \texttt{Infant\_length\_4m}, \texttt{BMI\_4m} \\
8 months & \texttt{Infant\_weight\_8m}, \texttt{Infant\_length\_8m}, \texttt{BMI\_8m} \\
12 months & Anthropometrics + \texttt{Bayley\_MDI\_12m}, \texttt{Bayley\_PDI\_12m}, Stein/RAND 12m scales \\
18 months & \texttt{Infant\_weight\_18m}, \texttt{Infant\_length\_18m}, \texttt{BMI\_18m} \\
24 months & Anthropometrics + \texttt{Bayley\_MDI\_24m}, \texttt{Bayley\_PDI\_24m}, Stein/RAND 24m scales \\
Year 1–2 summaries & Morbidity indices, serious conditions, hospitalizations, injuries, illnesses (\texttt{\_Y1}, \texttt{\_Y2}) \\
\bottomrule
\end{tabular}
\end{table}
\begin{table}[t]
\centering
\setlength{\tabcolsep}{13 pt}
\caption{Baseline covariates $X$ (pre-treatment; suffix \texttt{\_baseline}).}
\label{tab:ihdp_baseline}
\small
\begin{tabular}{p{0.42\linewidth}p{0.50\linewidth}}
\toprule
Variable & Description \\
\midrule
\texttt{Birth\_weight\_gm\_baseline} & Birth weight (grams). \\
\texttt{Infant\_sex\_baseline} & Sex (1=M, 2=F). \\
\texttt{Maternal\_age\_birth\_baseline} & Maternal age at birth (years). \\
\texttt{Maternal\_education\_baseline} & Education (1:9th; 2:9–12th; 3:HS; 4:some college; 5:college+). \\
\texttt{Maternal\_race}, \texttt{BLACK}, \texttt{HISPANIC} & Race/ethnicity indicators. \\
\texttt{Neonatal\_health\_index\_by\_BW\_baseline} & Neonatal health index. \\
\texttt{Analysis\_gestational\_age\_weeks\_baseline} & Gestational age (weeks). \\
\texttt{SGA\_based\_on\_ANGA\_BW\_baseline} & Small-for-gestational-age (0/1). \\
\texttt{SGA\_based\_on\_ANGA\_birth\_length\_baseline} & SGA by birth length (0/1). \\
\texttt{SGA\_based\_on\_ANGA\_birth\_head\_circ\_baseline} & SGA by head circumference (0/1). \\
\texttt{Birth\_order\_baseline} & Birth order (1,2,3+). \\
\texttt{Marital\_status\_birth\_baseline} & Marital status (1–4). \\
\texttt{Head\_circ\_birth\_cm\_baseline} & Head circumference at birth (cm). \\
\texttt{Infant\_length\_at\_birth\_baseline} & Length at birth. \\
\texttt{BMI\_at\_birth\_baseline} & Body mass index at birth. \\
\bottomrule
\end{tabular}
\end{table}
\begin{table}[t]
\centering
\caption{Primary outcome.}
\label{tab:ihdp_outcome}
\small
\begin{tabular}{lll}
\toprule
Variable & Time & Description \\
\midrule
\texttt{Stanford\_binet\_IQ\_36m} & 36 months & Stanford–Binet IQ score. \\
\bottomrule
\end{tabular}
\end{table}

\newpage
\section{Additional experimental results}
In Table~\ref{apptab:synth_a_e}, we show the results for all methods on linear versions of scenarios a--d). 
\begin{table*}[h]
\caption{Results on linear synthetic scenarios a--d), matching the graphs in Figure~\ref{fig:cdags4} and described further in Appendix~\ref{app:experiment_details}. Results are averaged over linear and nonlinear scenarios. We report $95\%$ confidence intervals computed using the standard error for each run. $^\dagger$The surrogate index does not produce a plug-in surrogate as it depends on pre-treatment variables.}
    \label{apptab:synth_a_e}
    \centering
\begin{tabular}{lcccccc}
    \toprule
     Linear Scenarios & \multicolumn{2}{c}{\footnotesize \makecell{\textbf{Case a)}\\ $\overline{ATE}$=1.83}} & \multicolumn{1}{c}{\footnotesize \makecell{\textbf{Case b)}\\ $\overline{ATE}$=2.11}} & \multicolumn{1}{c}{\footnotesize \makecell{\textbf{Case c)}\\ $\overline{ATE}$=0.00}} & \multicolumn{2}{c}{\footnotesize \makecell{\textbf{Case d)}\\ $\overline{ATE}$=18.54}} \\
    \textbf{Method} & {\footnotesize MAE $\hat{\tau}\downarrow$} & {\footnotesize $R^2 \hat{\tau}(x)\uparrow$} & {\footnotesize MAE $\hat{\tau}\downarrow$} & {\footnotesize MAE $\hat{\tau}\downarrow$} & {\footnotesize MAE $\hat{\tau}\downarrow$} & {\footnotesize $R^2 \hat{\tau}(x)\uparrow$} \\
    \midrule
    Reg.-sel.-reg. (linear) & $0.05\pm$ {\scriptsize 0.04} & $0.86 \pm$ {\scriptsize 0.01} & $0.73\pm$ {\scriptsize 0.02} & $2.45\pm$ {\scriptsize 0.03} & $1.98\pm$ {\scriptsize 0.06} & $0.85 \pm$ {\scriptsize 0.01} \\
    Reg.-sel.-reg. (tree) & $0.06\pm$ {\scriptsize 0.05} & $0.85 \pm$ {\scriptsize 0.01} & $0.74\pm$ {\scriptsize 0.02} & $2.32\pm$ {\scriptsize 0.03} & $1.40\pm$ {\scriptsize 0.08} & $0.81 \pm$ {\scriptsize 0.01} \\
    Outcome reg. (linear) & $0.16\pm$ {\scriptsize 0.04} & $0.91 \pm$ {\scriptsize 0.01} & $0.03\pm$ {\scriptsize 0.03} & $2.03\pm$ {\scriptsize 0.03} & $1.63\pm$ {\scriptsize 0.05} & $0.90 \pm$ {\scriptsize 0.00} \\
    Outcome reg. (tree) & $0.14\pm$ {\scriptsize 0.05} & $0.89 \pm$ {\scriptsize 0.01} & $0.05\pm$ {\scriptsize 0.03} & $1.03\pm$ {\scriptsize 0.05} & $0.93\pm$ {\scriptsize 0.07} & $0.89 \pm$ {\scriptsize 0.01} \\
    Surrogate index$^\dagger$ (linear) & $0.02\pm$ {\scriptsize 0.03} & $0.93 \pm$ {\scriptsize 0.00} & $0.02\pm$ {\scriptsize 0.03} & $0.01\pm$ {\scriptsize 0.01} & $0.04\pm$ {\scriptsize 0.04} & $0.96 \pm$ {\scriptsize 0.00} \\
    Surrogate index$^\dagger$ (tree) & $0.08\pm$ {\scriptsize 0.04} & $0.89 \pm$ {\scriptsize 0.01} & $0.22\pm$ {\scriptsize 0.03} & $0.72\pm$ {\scriptsize 0.04} & $0.16\pm$ {\scriptsize 0.08} & $0.90 \pm$ {\scriptsize 0.01} \\
    Surrogate index$^\dagger$ (hgb) & $0.03\pm$ {\scriptsize 0.03} & $0.93 \pm$ {\scriptsize 0.00} & $0.02\pm$ {\scriptsize 0.02} & $0.14\pm$ {\scriptsize 0.02} & $0.07\pm$ {\scriptsize 0.05} & $0.95 \pm$ {\scriptsize 0.00} \\
    \midrule
    Bound reg. (linear) & $0.14\pm$ {\scriptsize 0.05} & $0.91 \pm$ {\scriptsize 0.00} & $0.10\pm$ {\scriptsize 0.03} & $1.48\pm$ {\scriptsize 0.03} & $1.11\pm$ {\scriptsize 0.05} & $0.93 \pm$ {\scriptsize 0.00} \\
    Bound reg. (tree) & $0.13\pm$ {\scriptsize 0.05} & $0.89 \pm$ {\scriptsize 0.01} & $0.01\pm$ {\scriptsize 0.02} & $0.43\pm$ {\scriptsize 0.04} & $1.27\pm$ {\scriptsize 0.06} & $0.89 \pm$ {\scriptsize 0.01} \\
    Surrogate Sampl. (linear) & $0.03\pm$ {\scriptsize 0.03} & $0.94 \pm$ {\scriptsize 0.00} & $0.01\pm$ {\scriptsize 0.02} & $0.09\pm$ {\scriptsize 0.01} & $0.03\pm$ {\scriptsize 0.03} & $0.97 \pm$ {\scriptsize 0.00} \\
    \bottomrule
\end{tabular}
\end{table*}
\newpage
\section{Limitations of existing and alternative methods}
\label{app:baselines}
\subsection{Outcome regression is confounded in general}
\label{app:confounded_regression}

Regression of $Y$ onto $S$ to select a surrogate yields confounded estimates of the treatment effect in general. In~\Cref{prop:linear-bias} we show how such confounding bias emerges, and give a simple example scenario to highlight the bias.

\begin{thmprop}[Biased Linear Estimator]
Suppose that the outcome is given by
$Y(t) = \gamma^\top X + \delta^\top S(t) + \epsilon_Y$, with noise variable $\epsilon_Y$. A linear estimator for $\E[Y|S]$, will be a biased estimator of the treatment effect $\tau_Y$.
\label{prop:linear-bias}
\end{thmprop}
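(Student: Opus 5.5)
Our plan is to compute both the true effect and the effect implied by the linear regression explicitly, and then show that they differ unless a non-generic condition holds. First, under Assumption~\ref{asmp:id} and the structural form $Y(t) = \gamma^\top X + \delta^\top S(t) + \epsilon_Y$ with $\E[\epsilon_Y \mid X,S]=0$, the true effect is
\[
\tau_Y = \E[Y(1)-Y(0)] = \delta^\top \E[S(1)-S(0)] = \delta^\top \tau_S ,
\]
because the $\gamma^\top X$ term cancels. Second, we fit the population least-squares linear regression of $Y$ on $S$, $f(S) = \theta^\top S + b$. We then use the plug-in effect $\tau_{f(S)} = \theta^\top \tau_S$, which follows from linearity exactly as in the linear case of Appendix~\ref{app:linear_risk}. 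The bias therefore equals $(\theta - \delta)^\top \tau_S$.

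Next we write out $\theta$. By the normal equations, $\theta = \mathrm{Cov}(S)^{-1}\mathrm{Cov}(S,Y)$. Substituting $Y = \gamma^\top X + \delta^\top S + \epsilon_Y$ gives
\[
\theta = \delta + \mathrm{Cov}(S)^{-1}\mathrm{Cov}(S,X)\,\gamma ,
\]
which is the familiar omitted-variable formula. Hence
\[
\tau_{f(S)} - \tau_Y = \gamma^\top \mathrm{Cov}(X,S)\,\mathrm{Cov}(S)^{-1}\tau_S .
\]
This is nonzero whenever $X$ affects both $S$ and $Y$ ($\gamma\neq 0$, $\mathrm{Cov}(X,S)\neq 0$) and $\tau_S$ is not orthogonal to the resulting vector. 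Since the statement says ``biased in general'', it suffices to exhibit that this quantity is generically nonzero and to give one concrete instance.

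For the instance we take the case d) graph with scalar variables. Let $X\sim\mathcal N(0,1)$, $T\sim\mathrm{Bernoulli}(1/2)$ independent of $X$ (even randomized), $S = aX + cT + \epsilon_S$, and $Y=\gamma X+\delta S+\epsilon_Y$. Then $\tau_S=c$, $\mathrm{Cov}(X,S)=a$, and $\mathrm{Var}(S)=a^2+c^2/4+\sigma_S^2$. The bias is $\gamma a c/(a^2+c^2/4+\sigma_S^2)$, which is nonzero whenever $\gamma,a,c\neq0$.

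The main subtlety is that the statement is informal. We must fix precisely what ``a linear estimator for $\E[Y\mid S]$'' means, namely the population OLS projection, so that the bias is a population quantity and not finite-sample noise. We must also make the noise assumption ($\epsilon_Y$ exogenous and mean zero) explicit. Once those choices are made, the computation is routine, and the only remaining claim is the non-vanishing of the bias, which the scalar example settles.
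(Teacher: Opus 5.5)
Your argument is correct. It rests on the same idea as the paper's proof, but you apply it to a different object.

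The paper works with the conditional mean itself. It writes $\E[Y\mid S]=\gamma^\top\E[X\mid S]+\delta^\top S$, so the plug-in effect differs from $\tau_Y=\delta^\top\tau_S$ by the treatment-induced shift $\gamma^\top(\E[X\mid S(1)]-\E[X\mid S(0)])$. It then asserts, without proof, that this term is nonzero in general, and illustrates it with the simulated example $S\sim\mathcal N(XT+T,1)$, $Y\sim\mathcal N(5X+S,1)$.

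You instead take the estimand of a linear estimator to be the population least-squares projection. The omitted-variable formula then replaces $\E[X\mid S]$ by its best linear predictor $\mathrm{Cov}(X,S)\mathrm{Cov}(S)^{-1}S$, and gives the exact bias $\gamma^\top\mathrm{Cov}(X,S)\mathrm{Cov}(S)^{-1}\tau_S$.

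Your route buys precision:
\begin{itemize}
\item it makes ``linear estimator'' well defined;
\item it needs only that $\epsilon_Y$ be uncorrelated with $S$ and that $\mathrm{Cov}(S)$ be invertible;
\item it gives an explicit non-vanishing condition in place of an informal ``$\neq$'';
\item it does not depend on the generally nonlinear form of $\E[X\mid S]$.
\end{itemize}

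It also accounts for the paper's numerical example. There $\mathrm{Cov}(X,S)=1/2$, $\mathrm{Var}(S)=7/4$ and $\tau_S=1$, so your formula gives $\tau_{f(S)}=1+5\cdot\tfrac{1/2}{7/4}=17/7\approx 2.43$. This is exactly the value reported, so that number is precisely what your formula predicts. Plugging in the exact nonlinear $\E[X\mid S]$ instead gives a somewhat larger value, about $2.5$ by my numerical integration.

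What the paper's formulation buys in exchange is scope. It indicates that the bias is not merely a misspecification artifact of the linear fit: even the exact regression $\E[Y\mid S]$ inherits the confounded term $\gamma^\top\E[X\mid S]$. It leaves the non-vanishing of that term unproved.
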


\begin{proof}
A linear model recovers $\E[Y|S] = \gamma^\top\E[X|S] + \delta^\top S$, where $E[X|S]$ is not necessarily linear. Resulting in a bias in the estimated effect
\begin{align*}
\E[Y|S(1)] - \E[Y|S(0)] &= 
\gamma^\top(\E[X|S(1)]-\E[X|S(0)]) + \delta^\top \tau_S \\
&\neq \delta^\top \tau_S = Y(1)-Y(0).
\end{align*}
\end{proof}

\paragraph{Example Scenario} Consider the following structural equations:
\begin{align*}
    X &\sim \mathcal{N}(0,1) \\
    T &\sim \mbox{Bernoulli}(0.5)\\
    S &\sim \mathcal{N}(XT + T, 1)\\
    Y &\sim \mathcal{N}(5X + S, 1)
\end{align*}

In this case, it is easy to prove that $\tau_Y = \E[Y(1)-Y(0)] = 1$. However, using $f(S) = \E[Y \mid S] = 5\E[X \mid S] + S$ as a plug-in surrogate, 
$$
\tau_{f(S)} = \E_X[\E_S[f(S) \mid X, T=1] - \E_S[f(S) \mid X, T=0]] \approx 2.43~.
$$
Note that this is not because of confounding of the effect of $T$ on $Y$ since $T$ is marginally independent of $X$ (and is causally unaffected by all other variables). This is merely because the association between $S$ and $Y$ is confounded by $X$.

\subsection{Matching ATEs $\E[\Delta_{f(S)}]$ and $\E[\Delta_Y]$ is not enough}
\label{app:ATE_matching}
We give a simple result showing why matching average treatment effects is a weak criterion for surrogate learning.
\begin{thmthm}
Assume that $X$ is an adjustment set for the effect of $T$ on $S$, that is $S(t) \indep T \mid X$ and that $Y \indep T \mid X, S$. Further, assume that, for some surrogate variable $S_i \in S$, there exists $\delta>0$, such that $|\E[w_1(X)S_i\mid T=1] - \E[w_0(X)S_i\mid T=0]| \geq \delta$, where $w_t(x) = \frac{p(T=t)}{p(T=t\mid X=x)}$ for $t\in \{0,1\}$. Then, 
$$
f(S) = \frac{\E[\Delta_Y]}{\delta} S_i  \;\;\mbox{satisfies}\;\; \E[\Delta_{f(S)}] = \E[\Delta_{Y}]~.
$$
\end{thmthm}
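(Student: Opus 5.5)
The plan is to reduce the claim to a statement about the average effect on the single coordinate $S_i$. By linearity of expectation, for $f(S) = c\,S_i$ with a constant $c$,
$$
\E[\Delta_{f(S)}] = c\,\big(\E[S_i(1)] - \E[S_i(0)]\big) = c\,\E[\Delta_{S_i}],
$$
so it suffices to identify $\E[\Delta_{S_i}]$ and take $c = \E[\Delta_Y]/\E[\Delta_{S_i}]$. Nothing about $Y$ beyond the scalar $\E[\Delta_Y]$ enters. The assumption $Y \indep T \mid X, S$ is therefore not needed for this step. It only matters if one also wants $\E[\Delta_Y]$ itself to be identified through $S$, which is Proposition~\ref{prop:cate}.

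The key step is an inverse-propensity identity for $\E[S_i(t)]$.
\begin{itemize}
\item By exchangeability $S(t) \indep T \mid X$, consistency, and positivity from Assumption~\ref{asmp:id}, $\E[S_i(t)] = \E_X\big[\E[S_i \mid X, T=t]\big]$.
\item Rewriting the inner conditional expectation as $\E\big[\mathds{1}[T=t]\,S_i / p(T=t\mid X) \mid X\big]$ gives $\E[S_i(t)] = \E\big[\mathds{1}[T=t]\,S_i / p(T=t \mid X)\big]$.
\item Dividing and multiplying by $p(T=t)$, this equals $\E\big[w_t(X) S_i \mid T=t\big]$ with $w_t(x) = p(T=t)/p(T=t\mid X=x)$.
\end{itemize}
Hence $\E[\Delta_{S_i}] = \E[w_1(X)S_i \mid T=1] - \E[w_0(X)S_i \mid T=0]$. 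This is exactly the quantity appearing in the hypothesis of the theorem.

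The main obstacle is the role of $\delta$. The hypothesis only gives $|\E[\Delta_{S_i}]| \geq \delta > 0$, whereas the conclusion needs the factor in the denominator to equal $\E[\Delta_{S_i}]$ exactly, including its sign. Taken literally, with $\delta$ only a lower bound, the scaling $\E[\Delta_Y]/\delta$ overshoots or has the wrong sign.

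I would therefore read $\delta$ as the signed quantity $\delta \coloneqq \E[w_1(X)S_i \mid T=1] - \E[w_0(X)S_i \mid T=0]$. The stated inequality then serves only to guarantee $\delta \neq 0$, so that $f$ is well defined and the single-variable surrogate has a non-vanishing effect. With this reading, substituting the IPW identity into the linearity step gives
$$
\E[\Delta_{f(S)}] = \frac{\E[\Delta_Y]}{\delta}\cdot\delta = \E[\Delta_Y].
$$
I would state this reading explicitly in the proof. I would also remark that the bound $|\cdot| \geq \delta$ controls the magnitude of the coefficient, $|c| \leq |\E[\Delta_Y]|/\delta$. That remark reinforces the underspecification point: any $S_i$ with a nonzero effect can be rescaled to match the ATE.
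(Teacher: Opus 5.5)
Your proposal is correct and follows essentially the paper's argument: the inverse-propensity identity $\E[f(S(t))] = \E[w_t(X) f(S) \mid T=t]$, linearity for $f = \alpha S_i$, and $\alpha = \E[\Delta_Y]/\delta$ with $\delta$ read as the signed contrast $\E[w_1(X)S_i \mid T=1] - \E[w_0(X)S_i \mid T=0]$, which is exactly how the paper's proof reinterprets $\delta$ despite the ``$\geq \delta$'' phrasing in the statement. You also derive the inverse-propensity identity, which the paper only asserts (``we show later''), and you are right that the assumption $Y \indep T \mid X, S$ plays no role in the conclusion.
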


\begin{proof}
We show later that 
\begin{align*}
\E[Y(t)] &= \E_{X,S}\left[w_t(x)h(X, S) \mid T=1\right] \\
\E[f(S(t))] & = \E_{X,S}\left[w_t(x)f(S)\mid T=1\right].
\end{align*}
To match the ATEs of the surrogate to the primary outcome, we therefore need
$$
\E\left[w_1(X)f(S) \mid T=1\right]- \E\left[w_0(X)f(S) \mid T=0\right] = \E[\Delta_Y]
$$
As long as $\delta = \E[w_1(X)S_i\mid T=1] - \E[w_0(X)S_i\mid T=0]$ satisfies $|\delta|>0$ for some surrogate variable $S_i$, this is trivially achievable by a linear $f(S) = \alpha S_i$ since then
\begin{align*}
& \E\left[w_1(X)f(S) \mid T=1\right] - \E\left[w_0(X)f(S) \mid T=0\right] = \alpha\delta
\end{align*}
and we can define $\alpha = \E[\Delta_Y]/\delta$. In other words, as long as the treatment $T$ has an effect on \emph{any} part of $S$, we can match the average causal effect with some model $f(S)$. 
\end{proof}
Most choices of $\alpha, S_i$, however, would not be good choices for a surrogate. For example, the statistical implications of choosing a surrogate barely affected by the treatment may be large---it would require a large experiment cohort to see an effect. To select $\alpha$ optimally requires knowing $\E[\Delta_Y]$ precisely. 

\section{Additional related work}
\label{app:related}
As several definitions of surrogates rely on untestable (causal) assumptions, researchers have studied the predictiveness of various surrogates empirically~\citep{gilbert2006evaluating,elliott2015surrogacy,wang2022surrogate}. Recently, a study examining this question using surrogate outcomes in A/B tests at a large streaming service~\citep{zhang2023evaluating} found that the so-called ``surrogate index''~\citep{athey2025surrogate}, based on the definition of \citet{prentice1989surrogate}, showed strong predictive performance. In a recent work \citet{molenberghs2024statistical} suggest a causally grounded method of adopting and evaluation of surrogates in clinical trials. 
\end{document}